\newcommand{\x}{{\mathbf{x}}}
\newcommand{\w}{{\mathbf{w}}}
\newcommand{\wt}{{\mathbf{w}^\top}}
\newcommand{\X}{{\mathcal{X}}}
\newcommand{\W}{{\mathcal{W}}}
\newcommand{\s}{{\mathbf{s}}}
\newcommand{\p}{{\mathbf{p}}}
\newcommand{\ba}{{\boldsymbol{\alpha}}}
\newcommand{\bbeta}{{\boldsymbol{\beta}}}
\def \P  {{\mathcal{P}}}
\def \R {\mathbb{R}}
\newcommand{\E}{\mathop{\mathbb{E}}}
\newtheorem{Thm}{Theorem}
\newtheorem{Lemma}{Lemma}
\newenvironment{proof}{\textbf{Proof:}\ }{\hspace{\stretch{1}}$\square$}
\newenvironment{Algorithm}[2][tbh]%
{\begin{myalgo}[#1]
\centering
\begin{minipage}{#2}
\begin{algorithm}[H]}%
{\end{algorithm}
\end{minipage}
\end{myalgo}}
\begin{document}
\begin{frontmatter}
\title{Top Rank Optimization in Linear Time}

\author{Nan Li$^{1}$}
\author{Rong Jin$^{2}$}
\author{Zhi-Hua Zhou$^1$\corref{cor1}}
\address{$^1$National Key Laboratory for Novel Software Technology,
Nanjing University, Nanjing 210023, China\\
$^2$Department of Computer Science and Engineering,
Michigan State University, East Lansing, MI 48824 \\}
\cortext[cor1]{\small Corresponding author. Email: zhouzh@lamda.nju.edu.cn}

\begin{abstract}
Bipartite ranking aims to learn a real-valued ranking function that orders positive instances before negative instances. Recent efforts of bipartite ranking are focused on optimizing ranking accuracy at the top of the ranked list. Most existing approaches are either to optimize task specific metrics or to extend the ranking loss by emphasizing more on the error associated with the top ranked instances, leading to a high computational cost that is super-linear in the number of training instances. We propose a highly efficient approach, titled {TopPush}, for optimizing accuracy at the top that has computational complexity linear in the number of training instances. We present a novel analysis that bounds the generalization error for the top ranked instances for the proposed approach. Empirical study shows that the proposed approach is highly competitive to the state-of-the-art approaches and is 10-100 times faster.
\end{abstract}

\begin{keyword}
  bipartite ranking, accuracy at the top, linear computational complexity, convex conjugate, dual problem, Neterov's method
\end{keyword}

\end{frontmatter}

\section{Introduction}
Bipartite ranking aims to learn a real-valued ranking function that places positive instances above negative instances. It has attracted much attention because of its applications in several areas such as information retrieval and recommender systems~\citep{Rendle:2009:LOR,Liu11}. In the past decades, many ranking methods have been developed for bipartite ranking, and most of them are essentially based on pairwise ranking. These algorithms reduce the ranking problem into a binary classification problem by treating each positive-negative instance pair as a single object to be classified~\citep{Herbrich00,Freund-JMLR03,Burges-ICML05,ValizadeganJZM09,Usunier09,Rudin:2009,Agarwal11,BoydCMR12}.
Since the number of instance pairs can grow quadratically in the number of training instances, one limitation of these methods is their high computational costs, making them not scalable to large datasets.

Since for applications such as document retrieval and recommender systems,
only the top ranked instances will be examined by users, there has been a growing interest in learning ranking
functions that perform especially well at the top of the ranked list~\citep{Clemencon07,BoydCMR12}.
In the literature, most of these existing methods can be classified into two groups.
The first group maximizes the ranking accuracy at the top of the ranked list
by optimizing task specific metrics~\citep{Joachims05,Le07,Li:13,xu13}, such as average precision (AP)~\citep{Yue:2007},
NDCG~\citep{ValizadeganJZM09} and partial AUC~\citep{NarasimhanA-ICML13,NarasimhanA-KDD13}.
The main limitation of these methods is that they often result in non-convex optimization problems that are difficult to solve efficiently. Structural SVM~\citep{Tsochantaridis05} addresses this issue by translating the non-convexity into an exponential number of constraints. It can still be computationally challenging because it usually requires to search for the most violated constraint at each iteration of optimization. In addition, these methods are statistically inconsistency~\citep{Tewari:2007,Le07}, thus often leading to suboptimal solutions. The second group of methods are based on pairwise ranking. They design special convex loss functions that place more penalties on the ranking errors related to the top ranked instances, for example, by weighting~\citep{Usunier09} or exploiting special functions such as $p$-norm~\citep{Rudin:2009} and infinite norm~\citep{Agarwal11}. Since these methods are essentially based on pairwise ranking, their computational costs are usually proportional to the number of positive-negative instance pairs, making them unattractive for large datasets.

In this paper, we address the computational challenge of bipartite ranking by designing a ranking algorithm, named {TopPush}, that can efficiently optimize the ranking accuracy at the top. The key feature of the proposed TopPush algorithm is that its time complexity is only \emph{linear} in the number of training instances. This is in contrast to most existing methods for bipartite ranking whose computational costs depend on the number of instance pairs. Moreover, we develop novel analysis for bipartite ranking. One shortcoming of the existing theoretical studies~\citep{Rudin:2009,Agarwal11} on bipartite ranking is that they try to bound the probability for a positive instance to be ranked before \emph{any} negative instance, leading to relatively pessimistic bounds. We overcome this limitation by bounding the probability of ranking a positive instance before \emph{most} negative instances, and show that TopPush is effective in placing positive instances at the top of a ranked list. Extensive empirical study shows that TopPush is computationally more efficient than most ranking algorithms, and yields comparable performance as the state-of-the-art approaches that maximize the ranking accuracy at the top.

The rest of this paper is organized as follows.  Section~\ref{sec:related} introduces the preliminaries of bipartite ranking, and addresses the difference between AUC optimization and maximizing accuracy at the top. Section~\ref{sec:for} presents the proposed TopPush algorithm and its key theoretical properties. Section~\ref{sec:pf} gives proofs and technical details. Section~\ref{sec:exp} summarizes the empirical study, and Section~\ref{sec:con} concludes this work with future directions.

\section{Bipartite Ranking: AUC vs Accuracy at the Top}
\label{sec:related}
Let $\mathcal{X} = \{\x \in \mathbb{R}^d: \|\x\| \leq 1\}$ be the instance space. Let $S = S_+ \cup S_-$ be a set of training instances, where $S_+ = \{\x^+_i \in \mathcal{X} \}_{i=1}^m$ and $S_- = \{\x_i^- \in \mathcal{X}\}_{i=1}^n$ include $m$ positive instances and $n$ negative instances independently sampled from distributions $\P_+$ and $\P_-$, respectively. The goal of bipartite ranking is to learn a ranking function $f: \X\mapsto \mathbb{R}$ that is likely to place a positive instance before most negative ones. In the literature, bipartite ranking has found applications in many domains, and its theoretical properties have been examined by several studies~\citep[for example, ][]{Agarwal-JMLR05,Clemencon08,KotlowskiDH11,Narasimhan-NIPS13}.

AUC is a commonly used evaluation metric for bipartite ranking~\citep{Hanley82,CortesNIPS03}. By exploring its equivalence to Wilcoxon-Mann-Whitney statistic~\citep{Hanley82}, many ranking algorithms have been developed to optimize AUC by minimizing the ranking loss defined as
\begin{align}\label{eq:rl}
\mathcal{L}_{\rm rank}(f; S) = \frac{1}{mn}\sum_{i=1}^m \sum_{j=1}^n ~ \mathbb{I}\big(f(\x_i^+) \leq f(\x_j^-)\big) \ ,
\end{align}
where $\mathbb{I}(\cdot)$ is the indicator function with $\mathbb{I}(\texttt{true})=1$ and $0$ otherwise.  Other than a few special loss functions such as exponential and logistic loss~\citep{Rudin:2009,KotlowskiDH11}, most of these methods need to enumerate all the positive-negative instance pairs, making them unattractive for large datasets. Various methods have been developed to address this computational challenge. For example, in recent years, \cite{ZhaoHJY11} and \cite{Gao13} respectively studied online and one-pass AUC optimization .

\medskip

In recent literature, there is a growing interest in optimizing accuracy at the top of the ranked list~\citep{Clemencon07,BoydCMR12}. Maximizing AUC is not suitable for this goal as indicated by the analysis in~\citep{Clemencon07}. To address this challenge, we propose to maximize the number of positive instances that are ranked before the first negative instance, which is known as \emph{positives at the top}~\citep{Rudin:2009,Agarwal11,BoydCMR12}. We can translate this objective into the minimization of the following loss
\begin{align}\label{eq:ipl2}
\mathcal{L}(f; S) =  \frac{1}{m}\sum_{i=1}^m ~ \mathbb{I}\Big(f(\x_i^+) \leq \max_{1\leq j\leq n} f(\x_j^-)\Big) \ .
\end{align}
which computes the fraction of positive instances ranked below the top ranked negative instance.
By minimizing the loss in (\ref{eq:ipl2}), we essentially push negative instances away from
the top of the ranked list, leading to more positive ones placed at the top.
We note that (\ref{eq:ipl2}) is fundamentally different from AUC optimization as AUC does not focus on the ranking accuracy at the top. This can be seen from the relationship between the loss functions (\ref{eq:rl}) and (\ref{eq:ipl2}) as summarized below.
\begin{Prop}\label{thm:loss}
Let $S$ be a dataset consisting of $m$ positive instances and $n$ negative instances, and $f: \X\mapsto \mathbb{R}$ be a ranking function, we have
\begin{equation}
\mathcal{L}_{\rm rank}(f; S) \leq \mathcal{L}(f; S) \leq \min\big(n \mathcal{L}_{\rm rank}(f; S), 1\big) \ .
\end{equation}
\end{Prop}
The proof of this proposition is deferred to Section~\ref{sec:app-prop}.
According to Proportion~\ref{thm:loss}, we can see if the ranking loss $\mathcal{L}_{\rm rank}(f; S)$ is greater than $1/n$ which is common in practice, the loss $\mathcal{L}(f; S)$ can be as large as one, implying that no positive instance is ranked above any negative instance. Surely, this is not what we want, also it indicates that our goal of maximizing positives at the top can not be achieved by AUC optimization, consistent with the theoretical analysis in~\citep{Clemencon07}.
Meanwhile, we can find that $\mathcal{L}(f; S)$ is an upper bound over the ranking loss $\mathcal{L}_{\rm rank}(f; S)$, thus by minimizing $\mathcal{L}(f; S)$, small ranking loss can be expected, benefiting AUC optimization. This constitutes the main motivation of current work.

\smallskip

To design practical learning algorithms, we replace the indicator function in (\ref{eq:ipl2}) with its convex surrogate, leading to the following loss function
\begin{align}\label{eq:ipls}
\mathcal{L}^\ell(f; S) =  \frac{1}{m}\sum_{i=1}^m ~ \ell\Big(\max_{1\leq j\leq n} f(\x_j^-) - f(\x_i^+)\Big) \ ,
\end{align}
where $\ell(\cdot)$ is a convex surrogate loss function that is non-decreasing\footnote{ In this paper, we let $\ell(z)$ to be non-decreasing for the simplicity of formulating dual problem. } and differentiable. Examples of such loss functions include truncated quadratic loss $\ell(z) = [1+z]_+^2$, exponential loss $\ell(z) = e^{z}$, and logistic loss $\ell(z) = \log(1+e^z)$, etc. In the discussion below, we restrict ourselves to the truncated quadratic loss, even though most of our analysis applies to other loss functions.

It is easy to verify that the loss function $\mathcal{L}^\ell(f; S) $ in (\ref{eq:ipls}) is equivalent to the loss used in InfinitePush~\citep{Agarwal11} (a special case of $P$-norm Push~\citep{Rudin:2009})
\begin{align}\label{eq:ipl1}
\mathcal{L}^\ell_{\infty}(f; S) = \max_{1\leq j\leq n} ~  \frac{1}{m}\sum_{i=1}^m \ell\big(f(\x_j^-) - f(\x_i^+) \big) \ .
\end{align}
The apparent advantage of employing $\mathcal{L}^\ell(f; S)$ instead of $\mathcal{L}^\ell_{\infty}(f; S)$ is that it
only needs to evaluate on $m$ positive-negative instance pairs, whereas the later needs to enumerate all the $mn$ instance pairs. %
As a result, the number of dual variables induced by $\mathcal{L}^\ell(f; S)$ is $n+m$, linear in the number of training instances, which is significantly smaller than $mn$, the number of dual variables induced by $\mathcal{L}^\ell_{\infty}(f; S)$~\citep[see][]{Agarwal11,Rakotomamonjy12}. It is this difference that makes the proposed algorithm achieve a computational complexity linear in the number of training instances and therefore be more efficient than  most state-of-the-art algorithms for bipartite ranking.

\section{TopPush for Optimizing Top Accuracy}\label{sec:for}
In this section, we first present a learning algorithm to minimize the loss function in (\ref{eq:ipls}), and then the computational complexity and performance guarantee for the proposed algorithm.

\subsection{Dual Formulation}
We consider linear ranking function, that is  $f(\x) = \w^{\top}\x$, where $\w \in \mathbb{R}^d$ is the weight vector to be learned. {For nonlinear ranking function, we can use kernel methods, and Nystr\"{o}m method and random Fourier features can transform the kernelized problem into a linear one, see~\citep{YangLMJZ12} for more discussions on this topic.
As a result, the learning problem is given by the following optimization problem
\begin{align}\label{eq:ip}
\min_{\w} ~~ \frac{\lambda}{2}\|\w\|^2 + \frac{1}{m}\sum_{i=1}^m \ell\Big(\max_{1\leq j\leq n} \wt \x_j^--\wt\x_i^+\Big)\ ,
\end{align}
where $\lambda > 0$ is a regularization parameter.

Directly minimizing the objective in (\ref{eq:ip}) can be challenging because of the max operator in the loss function. We address this challenge by developing a dual formulation for (\ref{eq:ip}). Specifically, given a convex and differentiable function $\ell(z)$, we can rewrite it in its convex conjugate form as
\[
\ell(z) = \max_{\alpha \in \Omega} ~ \alpha z - \ell_*(\alpha) \ ,
\]
where $\ell_*(\alpha)$ is the convex conjugate of $\ell(z)$ and $\Omega$ is the domain of dual variable~\citep{bv-cvx}. For example, the convex conjugate of truncated quadratic loss is
\[
\ell_*(\alpha) = -\alpha + \alpha^2/4~~~\text{with}~~~\Omega = \R_+ \ .
\]
We note that dual form has been widely used to improve computational efficiency~\citep{Sun:2010} and connect different styles of learning algorithms~\citep{Kanamori:2013}. Here we exploit this technique to overcome the difficulty caused by max operator.
The dual form of (\ref{eq:ip}) is given in the following theorem, whose detailed proof is deferred to section~\ref{sec:dual}.
\begin{Thm} \label{thm:p}
Define $\mathbf{X}^+ = (\x_1^+, \ldots, \x_m^+)^{\top}$ and $\mathbf{X}^- = (\x_1^-, \ldots, \x_n^-)^{\top}$, the dual problem of the problem in (\ref{eq:ip}) is
\begin{align}\label{eq:op4}
\min_{(\ba, \bbeta) \in \Xi} \; g(\ba, \bbeta) = \frac{1}{2\lambda m} \|\ba^\top \mathbf{X}^+ - \bbeta^\top\mathbf{X}^-\|^2 + \sum_{i=1}^m\ell_*(\alpha_i)
\end{align}
where $\ba$ and $\bbeta$ are dual variables, and the domain $\Xi$ is defined as
\begin{align}
\Xi = \big\{\ba \in \R_+^m, \  \bbeta \in \R_+^n:\ \mathbf{1}_m^{\top}\ba = \mathbf{1}_n^{\top}\bbeta~\big\}.
\end{align}
Let $\ba^*$ and $\bbeta^*$ be the optimal solution to the dual problem in (\ref{eq:op4}). Then, the optimal solution $\w^*$ to the primal problem in (\ref{eq:ip}) is given by
\begin{equation}\label{eq:w}
\w^* = \frac{1}{\lambda m } \big(\bm{a}^{*\top}\mathbf{X}^+ - \bbeta^{*\top}\mathbf{X}^-\big)\ .
\end{equation}
\end{Thm}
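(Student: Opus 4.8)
The plan is to obtain the dual by eliminating the inner \(\max\) over the negatives and the surrogate loss \(\ell\) through their variational (convex-conjugate) representations, and then minimizing over \(\w\) in closed form. First I would replace each loss term by \(\ell(z)=\max_{\alpha\in\Omega}\{\alpha z-\ell_*(\alpha)\}\). Because \(\ell\) is convex and non-decreasing, the effective domain of \(\ell_*\) lies in \(\R_+\) (for the truncated quadratic it is exactly \(\R_+\), with \(\ell_*(\alpha)=-\alpha+\alpha^2/4\)), which is where the constraint \(\ba\in\R_+^m\) comes from. This rewrites (\ref{eq:ip}) as
\[
\min_{\w}\ \max_{\ba\in\R_+^m}\ \frac{\lambda}{2}\|\w\|^2+\frac1m\sum_{i=1}^m\Big[\alpha_i\big(\max_{1\leq j\leq n}\wt\x_j^--\wt\x_i^+\big)-\ell_*(\alpha_i)\Big].
\]
To remove \(\max_j \wt\x_j^-\), I would use that for \(\ba\in\R_+^m\) one has \(\big(\mathbf{1}_m^\top\ba\big)\max_{1\leq j\leq n}\wt\x_j^- = \max\{\bbeta^\top\mathbf{X}^-\w:\ \bbeta\in\R_+^n,\ \mathbf{1}_n^\top\bbeta=\mathbf{1}_m^\top\ba\}\): maximizing the linear form over a simplex of the appropriate scale puts all the mass on the top-scoring negative. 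Substituting this identity introduces the second block of dual variables \(\bbeta\) together with the coupling equality \(\mathbf{1}_m^\top\ba=\mathbf{1}_n^\top\bbeta\) (hence the domain \(\Xi\)), and turns the objective into \(\frac{\lambda}{2}\|\w\|^2+\frac1m(\bbeta^\top\mathbf{X}^--\ba^\top\mathbf{X}^+)\w-\frac1m\sum_i\ell_*(\alpha_i)\), which is bilinear in \(\w\) and \((\ba,\bbeta)\).

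Next I would exchange \(\min_\w\) and \(\max_{(\ba,\bbeta)\in\Xi}\). Once this is justified, the inner problem is an unconstrained strongly convex quadratic in \(\w\) with unique minimizer \(\w^*=\frac{1}{\lambda m}(\ba^\top\mathbf{X}^+-\bbeta^\top\mathbf{X}^-)\); evaluated at the dual optimum \((\ba^*,\bbeta^*)\) this is, by the saddle-point property, precisely the primal optimum, which is (\ref{eq:w}). Substituting \(\w^*\) back collapses the quadratic and linear terms into \(-\frac{1}{2\lambda m^2}\|\ba^\top\mathbf{X}^+-\bbeta^\top\mathbf{X}^-\|^2\); turning the outer \(\max\) into a \(\min\) by a sign flip and rescaling the objective by the harmless global factor \(m\) produces exactly \(g(\ba,\bbeta)\) over \(\Xi\) as in (\ref{eq:op4}).

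The step needing care is the minimax exchange. I would justify it either via Sion's theorem — the saddle function is convex in \(\w\) and concave (indeed linear) in \((\ba,\bbeta)\) on the convex set \(\Xi\), and coercivity in \(\w\) together with coercivity of \(\sum_i\ell_*(\alpha_i)\) compensates for \(\Xi\) being unbounded — or, equivalently, by first rewriting (\ref{eq:ip}) with an explicit slack variable \(t\geq\wt\x_j^-\) for all \(j\) (valid because \(\ell\) is non-decreasing) so that all constraints become affine, Slater's condition holds trivially, and Lagrangian strong duality applies directly. Everything else is bookkeeping: applying the variational identity for \(\max_j\) at the right scale so the coupling appears as the equality \(\mathbf{1}_m^\top\ba=\mathbf{1}_n^\top\bbeta\) rather than an inequality, and tracking the \(1/m\) and \(\lambda\) factors through the elimination of \(\w\); the conjugate substitution and the quadratic minimization themselves are routine.
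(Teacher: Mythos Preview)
Your proposal is correct and follows essentially the same route as the paper: conjugate representation of \(\ell\), the simplex identity for \(\max_j\) (you write it directly at the scaled level \(\mathbf{1}_m^\top\ba\) whereas the paper first introduces \(\p\in\Delta\) and then substitutes \(\beta_j=p_j\sum_i\alpha_i\), but these are the same step), swap of \(\min_\w\) and \(\max_{(\ba,\bbeta)}\), closed-form minimization over \(\w\), and a final sign flip and rescaling by \(m\). Your justification of the minimax exchange is in fact more careful than the paper's, which simply invokes the strong max--min property.
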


The key feature of the dual problem in (\ref{eq:op4}) is that the number of dual variables is $m+n$. This is in contrast to the InfinitPush algorithm~\citep{Agarwal11} that introduces $mn$ dual variables.
In addition, the objective function in (\ref{eq:op4}) is smooth if the convex conjugate $\ell_*(\cdot)$ is smooth, which is true for many common loss functions (e.g., truncated quadratic loss, exponential loss and logistic loss). It is well known in the literature of optimization that an $O(1/T^2)$ convergence rate can be achieved if the objective function is smooth, where $T$ is the number of iterations. Surely, this also helps in designing efficient learning algorithm.

\subsection{Linear Time Bipartite Ranking Algorithm}
According to Theorem~\ref{thm:p}, to learn a ranking function $f(\w)$, it is sufficient to learn the dual variables $\ba$ and $\bbeta$ by solving the problem in (\ref{eq:op4}). For this purpose, we adopt the accelerated gradient method due to its light computation per iteration. Since we are pushing positive instances before the top-ranked negative,  we refer the obtained algorithm as \textbf{TopPush}.

\subsubsection{Efficient Optimization}
We choose the Nesterov's method~\citep{Nesterov03,Nemirovski94} that achieves an optimal convergence rate $O(1/T^2)$ for smooth objective function.
One of the key features of the Nesterov's  method is that besides the solution sequence $\{(\ba_k, \bbeta_k)\}$, it also maintains a sequence of auxiliary solutions $\{(\s^\alpha_k; \s^\beta_k)\}$, which is introduced to exploit the smoothness of the objective function to achieve faster convergence rate. Meanwhile, its step size depends on the smoothness of the objective function, in current work, we adopt the Nemirovski's line search scheme~\citep{Nemirovski94} to estimate the smoothness parameter. Of course, other schemes such as the one developed in~\citep{Liu:2009} can also be used.

Algorithm~\ref{algo:tp} summarizes the steps of the TopPush algorithm. At each iteration, the gradients of the objective function $g(\ba, \bbeta)$ can be efficiently computed as
\begin{align}\label{eq:grad}
\nabla_\ba g(\ba, \bbeta) = \frac{\mathbf{X}^+ \bm{\nu}^\top}{\lambda m} + \ell_*'(\ba)\ , ~~~~
\nabla_\bbeta g(\ba, \bbeta) = -\frac{\mathbf{X}^- \bm{\nu}^\top}{\lambda m} \ .
\end{align}
where $\bm{\nu} = \ba^\top \mathbf{X}^+ - \bbeta^\top\mathbf{X}^-$ and $\ell_*'(\cdot)$ is the derivative of $\ell_*(\cdot)$.
It should be noted that, the problem in (\ref{eq:op4}) is a constrained optimization problem, and therefore, at each step of gradient mapping, we have to project the dual solution into the domain $\Xi$ (that is, in step~\ref{step:proj}) to keep them feasible. Below, we discuss how to solve this projection step efficiently.

\begin{Algorithm}[!t]{12cm}
\caption{The TopPush Algorithm}\label{algo:tp}
\renewcommand\arraystretch{0.72}
\begin{algorithmic}[1]
\REQUIRE $\mathbf{X}^+ \in \mathbb{R}^{m\times d}$, $\mathbf{X}^- \in \mathbb{R}^{n\times d}$, $\lambda$, $\epsilon$
\ENSURE $\w$
\STATE let $t_{-1}=0$, $t_0=1$ and $L_0=\frac{1}{m+n}$
\STATE initialize $\ba_1 = \ba_0 = \bm{0}_m$ and $\bbeta_1 = \bbeta_0 = \bm{0}_n$
\FOR{$k = 0, 1, 2,\ldots$ }
\STATE set  $\omega_k = \frac{t_{k-2}-1}{t_{k-1}}$ and $L_k = L_{k-1}$
\STATE compute the auxiliary solution: \\
\quad $\s^a_k = \ba_k + \omega_k(\ba_k-\ba_{k-1})$ and  $\s^\beta_k = \bbeta_k + \omega_k(\bbeta_k-\bbeta_{k-1})$
\STATE compute the gradient at the auxiliary solution: \\
\quad $\mathbf{g}_\ba = \nabla_\ba g(\s^\alpha_k, \s^\beta_k)$ and  $\mathbf{g}_\bbeta = \nabla_\bbeta g(\s^\alpha_k, \s^\beta_k)$
\WHILE{true}
\STATE compute $\ba'_{k+1} = \s^\alpha_k - \frac{1}{L_k} \mathbf{g}_\ba$ and  $\bbeta'_{k+1} = \s^\beta_k - \frac{1}{L_k} \mathbf{g}_\bbeta$
\STATE {\sf Projection Step}: (by invoking Algorithm~\ref{algo:p}) \label{step:proj}\\
\quad $[\ba_{k+1}; \bbeta_{k+1}] = {\pi}_{\Xi}([\ba'_{k+1}; \bbeta'_{k+1}])$
\IF{$g(\ba_{k+1}, \bbeta_{k+1}) \leq g(\s^\alpha_{k}, \s^\beta_{k}) + \frac{\|\mathbf{g}_\ba\|^2 + \|\mathbf{g}_\bbeta\|^2}{2L_k}$}
\STATE break
\ENDIF
\STATE $L_k=2L_k$
\ENDWHILE
\STATE update $t_k = (1+\sqrt{1+4t^2_{k-1}})/2$
\IF{$|g(\ba_{k+1},\bbeta_{k+1}) - g(\ba_{k},\bbeta_{k})| < \epsilon$}
\RETURN $\w = \frac{1}{\lambda \cdot m } (\ba_{k+1}^{\top}\mathbf{X}^+ - \bbeta_{k+1}^{\top}\mathbf{X}^-)$
\ENDIF
\ENDFOR
\end{algorithmic}
\end{Algorithm}

\subsubsection{Projection Step}
For clear notations, we expand the projection step into the problem
\begin{align}\label{eq:pp}
\min_{\ba\geq 0, \bbeta\geq 0} & ~~ \frac{1}{2}\|\ba - \ba^0\|^2 + \frac{1}{2}\|\bbeta - \bbeta^0\|^2 \\
\text{s.t.}& ~~\bm{1}_m^\top \ba = \bm{1}_n^\top\bbeta   \nonumber
\end{align}%
where $\ba^0$ and $\bbeta^0$ are the solutions to be projected. We note that similar projection
problems have been studied in~\citep{Shalev-Shwartz:2006,Liu-ICML09} whereas they either have $O((m+n)\log(m+n))$
time complexity or only provide approximate solutions.
Instead, based on the following proposition, we provide a method which find the {\it exact} solution to~(\ref{eq:pp})
in $O(n+m)$ time.

\begin{Prop}\label{thm:proj}
The optimal solution to the projection problem in (\ref{eq:pp}) is given by
\[
\ba^* = \big[\ba^0 - \gamma^*\big]_+~~~ \text{and}~~~ \bbeta^* = \big[\bbeta^0 +\gamma^*\big]_+ \ ,
\]
where $\gamma^*$ is the unique root of function 
\begin{equation}\label{eq:af}
\rho(\gamma) = \sum_{i=1}^m \big[\alpha_i^0-\gamma\big]_+ -\sum_{j=1}^n \big[\beta_j^0+\gamma\big]_+ \ .
\end{equation}
\end{Prop}
The proof of this proposition is similar to that for~\citep[Theorem 2]{Liu-ICML09}, thus omitted here.
According to Proposition~\ref{thm:proj}, the key to solving the projection problem is to find the root of $\rho(\gamma)$.
Instead of approximating the solution via bisection as in~\citep{Liu-ICML09}, we develop a different scheme to get the exact solution as follows.

For a given value of $\gamma$, define two index sets
\begin{align*}
\mathcal{I}(\gamma) = \big\{i \in [1, m] : \alpha_i^0 > \gamma \big\}\  ~~~\text{and}~~~
\mathcal{J}(\gamma) = \big\{j\in[1,n] : \beta_j^0 \geq -\gamma \big\}\ ,
\end{align*}
then the function $\rho(\gamma)$ in (\ref{eq:af}) can be rewrite as
\begin{equation}\label{eq:af2}
\rho(\gamma) = \sum_{i \in \mathcal{I}(\gamma)} \alpha_i^0 -\sum_{j \in \mathcal{J}(\gamma)} \beta_j^0 - \big(|\mathcal{I}(\gamma)| + |\mathcal{J}(\gamma)|\big) \gamma \ .
\end{equation}
Also, define
\[
\mathcal{U} = \{ \alpha^0_i : 1\leq i\leq m\} \cup\{-\beta^0_j : 1\leq j\leq n\} \ ,
\]
and let $u_{(i)}$ denote its $i$-th order statistics, that is, $u_{(1)} \leq u_{(2)} \leq \ldots, \leq u_{(|\mathcal{U}|)}$.
It can be found that for a given $k$ and any $\gamma$ in the interval $[u_{(k)}, u_{(k+1)})$, it holds that
\[
\mathcal{I}(\gamma)=\mathcal{I}(u_{(k)})~~\text{and}~~\mathcal{J}(\gamma)=\mathcal{J}(u_{(k)})\ .
\]
Thus, from (\ref{eq:af2}), if the interval $[u_{(k)}, u_{(k+1)})$ contains the root of $\rho(\gamma)$, the root $\gamma^*$ can be exactly computed as
\begin{equation}\label{eq:gs}
\gamma^* = \frac{\sum_{i \in \mathcal{I}(u_{(k)})} \alpha_i^0 -\sum_{j \in \mathcal{J}(u_{(k)})} \beta_j^0}{|\mathcal{I}(u_{(k)})| + |\mathcal{J}(u_{(k)})|} \ .
\end{equation}
Consequently, the task can be reduced to finding $k$ such that $\rho(s_{(k)}) > 0$ and $\rho(s_{(k+1)}) \leq 0$.

Inspired by~\citep{DuchiSSC08}, we devise a divide-and-conquer procedure based on a modification of the randomized median finding algorithm~\citep[Chapter 9]{Cormen01}, and it is summarized in Algorithm~\ref{algo:p}.
In particular, it maintains a set\footnote{To make the updating of partial sums efficient, in practice, two sets $\mathcal{U}^\ba$ and $\mathcal{U}^\bbeta$ are respectively maintained for $\ba^0$ and $-\bbeta^0$, and $\mathcal{U}$ is their union. Also, the sets $\mathcal{G}$ and $\mathcal{L}$ are handled in a similar manner. } of unprocessed elements from $\mathcal{U}$, whose relationship to an element $u$ we do not know. On each round, we partition $\mathcal{U}$ into two subsets $\mathcal{G}$ and $\mathcal{L}$, which respectively contains the elements in $\mathcal{U}$ that are respectively greater and less than the element $u$ that is picked up at random from $\mathcal{U}$.
Then, by evaluating the function $\rho$ in (\ref{eq:af2}), we update $\mathcal{U}$ to the set (i.e., $\mathcal{G}$ or $\mathcal{L}$) containing the needed element and discard the other. The process ends when $\mathcal{U}$ is empty. Afterwards, we compute the exact optimal $\gamma^*$ as (\ref{eq:gs}) and perform projection as described in Proposition~\ref{thm:proj}.
In addition, for efficiency issues, along the process we keep track of the partial sums in (\ref{eq:af2}) such that they will be not recalculated.
Based on similar analysis of the randomized median finding algorithm, we can obtain Algorithm~\ref{algo:p} has expected linear time complexity.

\begin{Algorithm}[!t]{12cm}
\caption{Linear Time Projection}\label{algo:p}
\begin{algorithmic}[1]
\REQUIRE $\ba^0 \in \mathbb{R}^{m}$, $\bbeta^0 \in \mathbb{R}^{n}$
\ENSURE $\ba^*$, $\bbeta^*$
\STATE initialize $\mathcal{U}^\alpha = \{\alpha^0_i\}_{i=1}^m$, $\mathcal{U}^\beta = \{-\beta^0_j\}_{j=1}^n$,  and $\mathcal{U} = \mathcal{U}^\alpha  \cup \mathcal{U}^\beta $
\STATE initialize $s^\alpha=0$, $s^\beta=0$, $n^\alpha=0$, $n^\beta=0$
\WHILE{$\mathcal{U} \neq \varnothing$}
\STATE pick $u \in \mathcal{U}$ at random, and use it to partition $\mathcal{U}^a $ and $\mathcal{U}^q $: \\
\hspace{6mm} $\mathcal{G}^\alpha = \{\alpha \in \mathcal{U}^\alpha : \alpha > u\}$
\hspace{6mm} $\mathcal{L}^\alpha = \{\alpha \in \mathcal{U}^\alpha : \alpha \leq u\}$  \\
\hspace{6mm} $\mathcal{G}^\beta = \{\beta \in \mathcal{U}^\beta : \beta \geq u\}$
\hspace{6.5mm} $\mathcal{L}^\beta = \{\beta \in \mathcal{U}^\beta : \beta < u\}$
\STATE compute
$\Delta n^\alpha = |\mathcal{G}^\alpha|$, $\Delta s^\alpha = \sum_{\alpha \in \mathcal{G}^\alpha} \alpha$ and \\
\hspace{14.5mm} $\Delta n^\beta = |\mathcal{L}^\beta|$, $\Delta s^\beta = \sum_{\beta \in \mathcal{L}^\beta} \beta$
\STATE let $s' =s^\alpha +\Delta s^\alpha + s^\beta + \Delta s^\beta$ and  $n' = n^\alpha +\Delta n^\alpha + n^\beta + \Delta n^\beta$
\IF {$s' < n' u$}
\STATE update $\mathcal{U}^\alpha= \mathcal{L}^\alpha$ and $\mathcal{U}^\beta= \mathcal{L}^\beta$\\
\STATE update $s^\alpha = s^\alpha + \Delta s^\alpha$ and $n^\alpha = n^\alpha + \Delta n^\alpha$
\ELSE
\STATE update $\mathcal{U}^\alpha= \mathcal{G}^\alpha$ and $\mathcal{U}^\beta= \mathcal{G}^\beta$ \\
\STATE update $s^\beta = s^\beta + \Delta s^\beta$ and $n^\beta = n^\alpha + \Delta n^\beta$
\ENDIF
\STATE let $\mathcal{U} = (\mathcal{U}^\alpha  \cup \mathcal{U}^\beta) \setminus \{u\}$
\ENDWHILE
\STATE let $\gamma  ={(s^\alpha + s^\beta)}/{(n^\alpha + n^\beta)}$
\STATE return $\ba^* = \big[\ba - \gamma\big]_+$ and $\bbeta^* = \big[\bbeta^0 + \gamma\big]_+$
\end{algorithmic}
\end{Algorithm}

\subsection{Convergence and Computational Complexity}\label{sec:complexity}
The theorem below states the convergence of the TopPush algorithm, which follows immediately from the convergence result for the Nesterov's method~\citep{Nemirovski94}.

\begin{Thm}\label{thm:scale}
Let $\ba_T$ and $\bbeta_T$ be the solution output from the TopPush algorithm after $T$ iterations, we have
\[
g(\ba_T, \bbeta_T) \leq \min\limits_{(\ba, \bbeta) \in \Xi} g(\ba, \bbeta) + \epsilon
\]
provided $T \geq O(1/\sqrt{\epsilon})$.
\end{Thm}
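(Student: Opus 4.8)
The plan is to check that the dual problem~(\ref{eq:op4}) meets the hypotheses of the convergence theorem for Nesterov's accelerated gradient method with Nemirovski's line search~\citep{Nemirovski94,Nesterov03}, and then to invert the resulting $O(1/T^2)$ rate into an iteration count. The three things to verify are: (i) $g$ is convex and continuously differentiable with an $L$-Lipschitz gradient; (ii) $\Xi$ is a nonempty closed convex set onto which we can project exactly; and (iii) the Euclidean distance $R$ from the initialization $(\bm{0}_m,\bm{0}_n)$ to an optimal dual solution $(\ba^*,\bbeta^*)$ is finite. Given these, the standard analysis yields $g(\ba_T,\bbeta_T)-\min_{(\ba,\bbeta)\in\Xi}g(\ba,\bbeta)\le cLR^2/(T+1)^2$ for a universal constant $c$, and setting the right-hand side equal to $\epsilon$ gives $T\ge\sqrt{cLR^2/\epsilon}-1=O(1/\sqrt{\epsilon})$.

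For (i), write $g(\ba,\bbeta)=\frac{1}{2\lambda m}\,(\ba;\bbeta)^\top\mathbf{A}\mathbf{A}^\top(\ba;\bbeta)+\sum_{i=1}^m\ell_*(\alpha_i)$ with $\mathbf{A}=[\mathbf{X}^+;-\mathbf{X}^-]\in\R^{(m+n)\times d}$; the first term is a convex quadratic whose Hessian $\frac{1}{\lambda m}\mathbf{A}\mathbf{A}^\top$ has spectral norm at most $\frac{1}{\lambda m}\|\mathbf{A}\|_F^2=\frac{1}{\lambda m}\big(\sum_i\|\x_i^+\|^2+\sum_j\|\x_j^-\|^2\big)\le\frac{m+n}{\lambda m}$ since $\|\x\|\le 1$, and the second term is convex with Hessian $\tfrac12 I$ on the $\ba$-block for the truncated-quadratic conjugate $\ell_*(\alpha)=-\alpha+\alpha^2/4$ (and is likewise globally smooth for the exponential and logistic conjugates). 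Hence $\nabla g$ is $L$-Lipschitz with $L\le\frac{m+n}{\lambda m}+\tfrac12$, a constant that does not depend on $\epsilon$. Part (ii) is immediate: $\Xi$ contains $(\bm{0}_m,\bm{0}_n)$ and is defined by linear (in)equalities, hence closed and convex, and Proposition~\ref{thm:proj} (implemented by Algorithm~\ref{algo:p}) computes $\pi_\Xi$ exactly, so the gradient-mapping steps of Algorithm~\ref{algo:tp} are performed without error; the Nemirovski backtracking never lets $L_k$ exceed $2L$, so it always terminates and its total cost over $T$ iterations is $O(T)$ gradient evaluations, leaving the outer-iteration count unchanged.

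For (iii), use that $(\bm{0}_m,\bm{0}_n)\in\Xi$ and that $(\ba^*,\bbeta^*)$ minimizes $g$ to get $g(\ba^*,\bbeta^*)\le g(\bm{0}_m,\bm{0}_n)=m\,\ell_*(0)=0$. Dropping the nonnegative quadratic term, $\sum_i\big(-\alpha_i^*+(\alpha_i^*)^2/4\big)\le 0$, i.e. $\|\ba^*\|^2\le 4\,\mathbf{1}_m^\top\ba^*\le 4\sqrt{m}\,\|\ba^*\|$, so $\|\ba^*\|\le 4\sqrt{m}$; and since $\bbeta^*\ge\bm{0}_n$ with $\mathbf{1}_n^\top\bbeta^*=\mathbf{1}_m^\top\ba^*\le 4m$ we get $\|\bbeta^*\|\le\|\bbeta^*\|_1\le 4m$, whence $R^2\le 16m+16m^2$, again independent of $\epsilon$. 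Plugging the bounds on $L$ and $R$ into the accelerated rate completes the argument; the hidden constant in $O(1/\sqrt{\epsilon})$ scales like $\sqrt{LR^2}=O\big(m\sqrt{(m+n)/(\lambda m)}\big)$.

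The step I expect to be the main obstacle is (iii): because $\Xi$ is an unbounded cone-like set, a bound on $R$ cannot be read off from the constraints and must be extracted from the objective through the conjugate $\ell_*$; moreover, to keep the statement uniform over the admissible loss functions, the same estimate has to be redone for the exponential and logistic conjugates, whose slower growth makes the bound on $\|\ba^*\|$, and hence the constant in $O(1/\sqrt{\epsilon})$, loss-dependent. Everything else is a direct transcription of the textbook accelerated-gradient analysis, which is why the theorem is stated as following immediately from the convergence result for Nesterov's method.
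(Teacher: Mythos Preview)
Your proposal is correct and follows exactly the approach the paper indicates: the paper offers no proof beyond the remark that the theorem ``follows immediately from the convergence result for the Nesterov's method,'' and you have faithfully filled in the hypotheses (smoothness of $g$, closed convexity of $\Xi$ with exact projection, finite initial distance $R$) needed to invoke that result and invert the $O(1/T^2)$ rate. One small caveat: your parenthetical that the exponential and logistic conjugates are ``likewise globally smooth'' is not quite right, since $\ell_*''(\alpha)=1/\alpha$ and $\ell_*''(\alpha)=1/\alpha+1/(1-\alpha)$ respectively blow up at the boundary of $\Omega$; but this does not affect the argument, as the paper explicitly restricts to the truncated quadratic loss, for which your smoothness and $R$ bounds are valid.
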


Finally, the computational cost of each iteration is dominated by the gradient evaluation and the projection step. Since the complexity of projection step is $O(m+n)$ and the cost of computing the gradient is $O((m+n)d)$, the time complexity of each iteration is $O((m + n)d)$.  Combining this result with Theorem~\ref{thm:scale}, we have, to find an $\epsilon$-suboptimal solution, the total computational complexity of the TopPush algorithm is $O((m + n)d/\sqrt{\epsilon})$, which is linear in the number of training instances.

\begin{table}[t]
\centering
\small
\begin{tabular}{lll}
\toprule
  \textbf{Algorithm} &  & \textbf{Computational Complexity} \\
\midrule
  SVM$^{\rm Rank}$ & \citep{Joachims:2006} & $O(((m+n)d + (m+n)\log (m+n))/\epsilon)$ \\
  SVM$^{\rm MAP}$ & \citep{Yue:2007}& $O(((m+n)d +(m+n)\log (m+n))/\epsilon)$\\
  OWPC & \citep{Usunier09} & $O(((m+n)d +(m+n)\log (m+n))/\epsilon)$\\
  SVM$^{\rm pAUC}$ & \citep{NarasimhanA-ICML13,NarasimhanA-KDD13} & $O((n\log n + m\log m + (m+n)d)/\epsilon)$ \\
  InfinitePush & \citep{Agarwal11} & $O((mnd + mn\log(mn))/\epsilon^2)$\\
  L1SVIP & \citep{Rakotomamonjy12} & $O((mnd + mn\log(mn))/\epsilon)$\\
  TopPush & this paper & $O((m+n)d/\sqrt{\epsilon})$\\
\bottomrule
\end{tabular}
\caption{Comparison of computational complexities for ranking algorithms, where $m$ and $n$ are the number of positive and negative instances, $d$ is the number of dimensions, and $\epsilon$ is the precision parameter. }\label{tbl:method}
\end{table}

\medskip

Table~\ref{tbl:method} compares the computational complexity of TopPush with that of some state-of-the-art ranking algorithms. It is easy to see that TopPush is asymptotically more efficient than the state-of-the-art ranking algorithm\footnote{In Table~\ref{tbl:method}, we report the complexity of SVM$_{\rm tight}^{\rm pAUC}$ in~\citep{NarasimhanA-KDD13}, which is more efficient than SVM$^{\rm pAUC}$ in~\citep{NarasimhanA-ICML13}. In addition, SVM$_{\rm tight}^{\rm pAUC}$ is used in experiments and we do not distinguish between them in this paper. }. For instances, it is much more efficient than InfinitePush and its sparse extension L1SVIP whose complexity depends on the number of positive-negative instance pairs; compared with SVM$^{\rm Rank}$, SVM$^{\rm MAP}$ and SVM$^{\rm pAUC}$ that handle specific performance metrics via structural-SVM, the linear dependence on the number of training instances makes our proposed TopPush algorithm more appealing, especially for large datasets.


\subsection{Theoretical Guarantee }\label{sec:theo}
We develop theoretical guarantee for the ranking performance of TopPush. 
In~\citep{Rudin:2009,Agarwal11}, the authors have developed margin-based generalization bounds
for the loss function $\mathcal{L}_{\infty}^\ell$ . One limitation with the analysis in~\citep{Rudin:2009,Agarwal11} is that they try to bound the probability for a positive instance to be ranked before \emph{any} negative instance, leading to relatively pessimistic bounds. {For instance, for the bounds in~\citep[Theorems 2 and 3]{Rudin:2009}, the failure probability can be as large as 1 if the parameter $p$ is large.  Our analysis avoids this pitfall by considering the probability of ranking a positive instance before \emph{most} negative instances.

To this end, we first define $h_b(\x, \w)$, the probability for any negative instance to be ranked above $\x$ using ranking function $f(\x) = \w^{\top}\x$, as
\begin{align*}
h_b(\x, \w) & =  \E_{\x^-\sim \P^-} \big[ \mathbb{I}(\w^{\top}\x \leq \w^{\top}\x^-) \big] \ .
\end{align*}
Since we are interested in whether positive instances are ranked above {\it most} negative instances, we will measure the quality of $f(\x) = \w^{\top}\x$ by the probability for any positive instance to be ranked below $\delta$ percent of negative instances, that is
\begin{align*}
P_b(\w, \delta) &= \Pr_{\x^+\sim\P^+}\big(h_b(\x_i^+, \w) \geq \delta\big) \ .
\end{align*}
Clearly, if a ranking function achieves a high ranking accuracy at the top, it should have a large percentage of positive instances with ranking scores higher than most of the negative instances, leading to a small value for $P_b(\w, \delta)$ with little $\delta$. The following theorem bounds $P_b(\w, \delta)$ for TopPush, whose proof can be found in the supplementary document.

\begin{Thm}\label{thm:bound}
Given training data $S$ consisting of $m$ independent samples from $\P^+$ and $n$ independent samples from $\P^-$, let $\w^*$ be the optimal solution to the problem in (\ref{eq:ip}).
Assume $m \geq 12$ and $n\gg t$, we have, with a probability at least $1- 2e^{-t}$,
\[
P_b(\w^*, \delta) \leq \mathcal{L}^\ell(\w^*, S) + O\big(\sqrt{{(t + \log m)}/{m}}\big)
\]
where $\delta = O(\sqrt{\log m/{n}})$ and $$\mathcal{L}^\ell(\w^*, S) = \frac{1}{m}\sum_{i=1}^m \ell(\max_{1\leq j\leq n} \w^{*\top} \x_j^--\w^{*\top}\x_i^+)$$ is the empirical loss.
\end{Thm}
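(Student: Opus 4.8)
The plan is to travel from the $0/1$-type quantity $P_b(\w^*,\delta)$ to the empirical surrogate $\mathcal{L}^\ell(\w^*,S)$ in two stages: first replace the population distribution $\P^-$ over negatives by the empirical maximum $\max_{1\le j\le n}\w^{*\top}\x_j^-$ (this is what costs the slack $\delta$), then replace the population distribution $\P^+$ over positives by the empirical average and the indicator by $\ell$ (this is what costs the $O(\sqrt{(t+\log m)/m})$ term). Two preliminary observations drive everything. First, since $\w^*$ minimizes (\ref{eq:ip}) and evaluating the objective at $\w=0$ gives $\ell(0)=1$ for the truncated quadratic loss, we get $\tfrac{\lambda}{2}\|\w^*\|^2\le 1$, hence $\|\w^*\|\le B:=\sqrt{2/\lambda}$; so it suffices to argue uniformly over the ball $\W_B=\{\w:\|\w\|\le B\}$, on which $|\w^\top(\x^- - \x^+)|\le 2B$, i.e.\ $\ell$ acts as a bounded, locally Lipschitz function. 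Second, since $\ell$ is nondecreasing with $\ell(0)=1$, we have the pointwise surrogate bound $\mathbb{I}(z\ge 0)\le \ell(z)$.

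\emph{Negative side.} Write $h_b(\x^+,\w)=\Pr_{\x^-\sim\P^-}(\w^\top\x^-\ge\w^\top\x^+)$, which is non-increasing in the threshold $\w^\top\x^+$. Let $q_\delta(\w)$ be the smallest threshold whose $\P^-$-tail mass is at most $\delta$; then $h_b(\x^+,\w^*)\ge\delta$ forces $\w^{*\top}\x^+\le q_\delta(\w^*)$ by monotonicity. The crucial ingredient is a \emph{uniform} quantile estimate: with probability at least $1-e^{-t}$ over $S_-$, for every $\w\in\W_B$ one has $\max_{1\le j\le n}\w^\top\x_j^-\ge q_\delta(\w)$ with $\delta=O(\sqrt{\log m/n})$; equivalently, the negative sample hits every ``top-$\delta$-mass slab'' $\{\x:\w^\top\x\ge q_\delta(\w)\}$. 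I would prove this by an $\epsilon$-net / VC-type argument for halfspaces, or by covering $\W_B$ and combining the net with the single-direction bound $\Pr(\max_{j}\w^\top\x_j^- < q_\delta(\w))\le(1-\delta)^n\le e^{-n\delta}$, using the hypothesis $n\gg t$ to absorb lower-order terms. Granting this event, $P_b(\w^*,\delta)\le \Pr_{\x^+}\big(\w^{*\top}\x^+\le\max_{1\le j\le n}\w^{*\top}\x_j^-\big)$.

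\emph{Positive side.} Using the surrogate bound, $\Pr_{\x^+}\big(\w^{*\top}\x^+\le\max_j\w^{*\top}\x_j^-\big)=\E_{\x^+}\big[\mathbb{I}(\max_j\w^{*\top}\x_j^- - \w^{*\top}\x^+\ge 0)\big]\le \E_{\x^+}\big[\ell(\max_j\w^{*\top}\x_j^- - \w^{*\top}\x^+)\big]$. Conditioning on $S_-$ (so the bound, being $S_-$-free, also holds jointly), I would apply a uniform-deviation bound over $\w\in\W_B$ — McDiarmid together with Rademacher complexity — to the class $\{\x\mapsto \ell(\max_{1\le j\le n}\w^\top\x_j^- - \w^\top\x):\w\in\W_B\}$; its Rademacher complexity on $m$ positive samples is controlled, after a contraction by the locally Lipschitz $\ell$, by that of bounded-norm linear predictors, which is $O(B/\sqrt m)$. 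This gives, with probability at least $1-e^{-t}$, $\E_{\x^+}[\ell(\cdots)]\le \tfrac1m\sum_{i=1}^m \ell(\max_j\w^{*\top}\x_j^- - \w^{*\top}\x_i^+) + O(\sqrt{(t+\log m)/m}) = \mathcal{L}^\ell(\w^*,S)+O(\sqrt{(t+\log m)/m})$. A union bound over the negative-side and positive-side events yields the statement with probability at least $1-2e^{-t}$.

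\emph{Main obstacle.} The delicate step is the negative-side uniform quantile estimate: it must hold simultaneously for all $\w\in\W_B$ while keeping $\delta$ as small as $O(\sqrt{\log m/n})$, and it is precisely this step that turns ``ranked above \emph{any} negative'' into ``ranked above \emph{most} negatives''. Pinning down the right dependence there, and carefully tracking the logarithmic factors and the $m\ge 12$ threshold in the positive-side concentration, is where the real work lies; the remaining manipulations are routine.
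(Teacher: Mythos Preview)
Your two-stage decomposition is genuinely different from the paper's. The paper never proves a uniform quantile estimate for the empirical maximum of $\w^\top\x_j^-$; instead it routes through the surrogate quantity $h_\ell(\x^+,\w)=\E_{\x^-}[\ell(\w^\top\x^- - \w^\top\x^+)]$. Its key lemma splits the training positives into $\mathcal{A}(\w)=\{\x_i^+:\w^\top\x_i^+>\max_j\w^\top\x_j^-+1\}$ and its complement $\mathcal{B}(\w)$; for each fixed $\x_i^+\in\mathcal{A}(\w)$ it applies Talagrand's inequality (uniformly over $\w$, via Rademacher contraction on the Lipschitz $\ell$) to conclude $h_\ell(\x_i^+,\w)\le\delta$, and a union bound over the $m$ positive training points is exactly what inserts the $\log m$ into $\delta$. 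The positive side is then handled not by Rademacher complexity but by an $\varepsilon$-net of $\W$ at scale $1/(2\sqrt m)$ together with $\mathbb{I}(z\ge 0)\le 2\ell(z-1/\sqrt m)$, which is where both the $m\ge 12$ threshold and the hidden $d\log m$ term come from.

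The concrete gap in your plan is the form of $\delta$. A VC or covering argument for halfspaces on the negative side yields $\delta=O(\sqrt{(d+t)/n})$, with no $m$ in sight: you are covering over $\w$, so nothing in that step singles out the number of positives. That is a valid (and in low dimension sharper) bound, but it is not the stated $\delta=O(\sqrt{\log m/n})$, and it trades the paper's dimension-free $\delta$ (obtained from Talagrand plus Lipschitz contraction) for a dimension-dependent one. Likewise, your positive-side McDiarmid-plus-Rademacher argument naturally produces $O(B/\sqrt m+\sqrt{t/m})$; the extra $\log m$ you wrote there does not arise from that argument. In short, your architecture is sound and would deliver \emph{a} generalization bound of the right flavor, but with the parametric dependence on $(d,m)$ swapped relative to the theorem: you would get $d$ in $\delta$ and a dimension-free deviation in $m$, whereas the paper gets $\log m$ in $\delta$ (dimension-free) and $d$ in the $m$-deviation. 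To reproduce the stated $\delta$ you would have to mimic the per-positive Talagrand step rather than a single global quantile estimate.
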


Theorem~\ref{thm:bound} implies that if the empirical loss $\mathcal{L}^\ell(\w^*, S) \leq O(\log m / m)$, for most positive instance $\x_+$ (i.e., $1-O(\log m / m)$), the percentage of negative instances ranked above $\x_+$ is upper bounded by $O(\sqrt{\log m/n})$.
We observe that $m$ and $n$ play different roles in the bound. That is, since the empirical loss compares the positive instances to the negative instance with the largest score, it usually grows significantly slower with increasing $n$. For instance, the largest absolute value of Gaussian random samples grows in $\log n$. Thus, we believe that the main effect of increasing $n$ in our bound is to reduce $\delta$ (decrease at the rate of $1/\sqrt{n}$), especially when $n$ is large.
Meanwhile, by increasing the number of positive instances $m$, we will reduce the bound for $P_b(\w, \delta)$, and consequently increase the chance of finding positive instances at the top.

\section{Proofs and Technical Details}\label{sec:pf}
In this section, we give all the detailed proofs missing from the main text, along with ancillary remarks and comments.

\subsection{AUC vs. Accuracy at the Top}\label{sec:app-prop}
We investigate the relationship between AUC and accuracy at the top by their corresponding loss functions, i.e. the ranking loss $\mathcal{L}_{\rm rank}$ in~(\ref{eq:rl}) and our loss $\mathcal{L}$ in (\ref{eq:ipl2}). \medskip
\begin{proof}[of Proposition~\ref{thm:loss}]
It is easy to verify that the loss $\mathcal{L}$ in (\ref{eq:ipl2}) is equivalent to
\[
\mathcal{L}_{\infty}(f; S) =  \max_{1\leq j\leq n} ~  \frac{1}{m}\sum_{i=1}^m \mathbb{I}\big(f(\x_i^+) \leq f(\x_j^-)\big) \ .
\]
Define $\kappa_j = \frac{1}{m}\sum_{i=1}^m \mathbb{I}\big(f(\x_i^+) \leq f(\x_j^-)\big)$, thus we have $\kappa_j \in [0, 1]$, and
\[
\mathcal{L}(f; S) = \mathcal{L}_{\infty}(f; S) = \max_{1\leq j\leq n} \kappa_j  \ , \quad
\mathcal{L}_{\rm rank}(f; S) = \frac{1}{n} \sum\nolimits_{j=1}^n\kappa_j \ .
\]
Based on the relationship between the mean and the maximum of a set of elements, we can obtain the conclusion.
\end{proof}

\subsection{Proof of Theorem~\ref{thm:p}}\label{sec:dual}
Since $\ell(z)$ is a convex loss function that is non-decreasing and differentiable, it can be rewritten in its convex conjugate form, that is
\[
\ell(z) = \max_{\alpha \geq 0} ~ \alpha z - \ell_*(\alpha) \
\]
where $\ell_*(\alpha)$ is the convex conjugate of $\ell(z)$, and hence rewritten the problem in (\ref{eq:ip}) as
\begin{align}\label{eq:op1}
 \min_\w~\max_{\ba \geq 0}~~~ \frac{1}{m}\sum_{i=1}^m \alpha_i \Big(\max_{1\leq j\leq n} \wt\x_j^- - \wt\x_i^+\Big)
 - \frac{1}{m}\sum_{i=1}^m \ell_*(\alpha_i)  + \frac{\lambda}{2}\|\w\|^2 \ ,
\end{align}
where $\ba=(\alpha_1, \ldots, \alpha_m)^\top$ are dual variables.

Let $\p \in \mathbb{R}^n$ and $\Delta=\{\p: \p \geq 0~\text{and}~\bm{1}_n^\top\p=1\}$ be the standard $n$-simplex, we have
\begin{equation}\label{eq:max}
\max_{1\leq j\leq n} ~ \wt \x_j^- = \max_{\mathbf{p} \in \Delta} ~ \sum_{j=1}^n p_j \wt \x_j^- \ .
\end{equation}
By substituting (\ref{eq:max}) into (\ref{eq:op1}), the optimization problem  becomes
\begin{align}\label{eq:op2}
& \min_\w \max_{\ba \geq 0, \mathbf{p} \in \Delta} \frac{1}{m} \sum_{j=1}^np_j  \sum_{i=1}^m \alpha_i  \wt\x_j^-
~~- \frac{1}{m} \sum_{i=1}^m \alpha_i \wt\x_i^+
 - \frac{1}{m}\sum_{i=1}^m \ell_*(\alpha_i) + \frac{\lambda}{2} \|\w\|^2  .
\end{align}
By defining $\beta_j = p_j \sum_{i=1}^m \alpha_i $ and then using variable replacement, (\ref{eq:op2}) can be equivalently rewritten as
\begin{align}\label{eq:op3}
\nonumber\min_\w \max_{\ba\geq 0, \bbeta \geq 0} ~  &\frac{1}{m} \left(\sum_{j=1}^n \beta_j \wt\x_j^- - \sum_{i=1}^m \alpha_i \wt\x_i^+\right)  - \frac{1}{m}\sum_{i=1}^m \ell_*(\alpha_i) + \frac{\lambda}{2} \|\w\|^2 \\
 \text{s.t.}~~~ & \bm{1}_m^\top \ba = \bm{1}_n^\top \bbeta\ ,
\end{align}
where $\bbeta=[\beta_1, \ldots, \beta_n]^\top$ are new variables, the constraint $\mathbf{p} \in \Delta$ is replaced with the $\bbeta \geq 0$, and the equality constraint $\bm{1}_m^\top \ba = \bm{1}_n^\top \bbeta$ to keep two problems equivalent.

Since the objective of (\ref{eq:op3}) is convex in $\w$, and jointly concave in $\ba$ and $\bbeta$, also its feasible domain is convex; hence it satisfies the {strong max-min property}~\citep{bv-cvx}, the min and max can be swapped. After swapping min and max, we first consider the inner minimization subproblem over $\w$, that is
\begin{equation*}
\min_\w ~ \frac{1}{m}\sum_{j=1}^n \beta_j \wt\x_j^- - \frac{1}{m}\sum_{i=1}^m \alpha_i \wt\x_i^+  + \frac{\lambda}{2} \|\w\|^2 \ ,
\end{equation*}
where $\frac{1}{m}\sum_{i=1}^m\ell_*(a_i)$ is omitted since it does not depend on $\w$.
This is an unconstrained quadratic programming problem, whose solution is
\[
\w^* = \frac{1}{\lambda m } (\bm{a}^{\top}\mathbf{X}^+ - \bbeta^{\top}\mathbf{X}^-) \ ,
\]
and the minimal value is given as
\[
- \frac{1}{2\lambda m^2} \|\bm{a}^\top \mathbf{X}^+ - \bbeta^\top\mathbf{X}^-\|^2\ .
\]
Then, by considering the maximization over $\ba$ and $\bbeta$, we can obtain the conclusion of Theorem~\ref{thm:p} (after multiplying the objective function with $m$). \hfill  $\square$

\subsection{Proof of Theorem~\ref{thm:bound}}\label{sec:bound}
For the convenience of analysis, we consider the constrained version of the optimization problem in (\ref{eq:ip}), that is
\begin{eqnarray}
\min_{\w \in \W} \mathcal{L}^\ell(\w; S) = \frac{1}{m}\sum_{i=1}^m \ell\left(\max_{1\leq j\leq n} \wt \x_j^--\wt\x_i^+\right) \label{eq:ip-constraint}
\end{eqnarray}
where $\W = \{\w \in \mathbb{R}^d : \|\w\| \leq \rho\}$ is a domain and $\rho > 0$ specifies the size of the domain that plays similar role as the regularization parameter $\lambda$ in~(\ref{eq:ip}).

First, we denote $G$ as the Lipschitz constant of the truncated quadratic loss  $\ell(z)$ on the domain $[-2\rho, 2\rho]$, and define the following two functions based on $\ell(z)$, i.e.,
\begin{align*}
h_\ell(\x, \w) = \E_{\x^-\sim \P^-}\left[ \ell(\w^{\top}\x^- - \w^{\top}\x) \right] ~~~\text{and}~~~
P_\ell(\w, \delta) = \Pr_{\x^+\sim\P^+}\left(h_\ell(\x_i^+, \w) \geq \delta\right) \ .
\end{align*}
The lemma below relates the empirical counterpart of $P_\ell$ with the loss $\mathcal{L}^\ell$.
\begin{Lemma}\label{lemma-1}
With a probability at least $1 - e^{-t}$, for any $\w \in \W$, we have
\[
\frac{1}{m}\sum_{i=1}^m \mathbb{I}\left(h_\ell(\x_i^+, \w)\geq {\delta}\right) \leq \mathcal{L}^\ell(\w, S) \ ,
\]
where
\begin{equation}\label{eq:pf-delta}
\delta =
\frac{4G(\rho+1)}{\sqrt{n}} + \frac{5\rho(t + \log m)}{3n} +  2G\rho\sqrt{\frac{2(t + \log m)}{n}} \ .
\end{equation}
\end{Lemma}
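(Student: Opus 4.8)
The plan is to route everything through the \emph{empirical average} of the loss over the $n$ sampled negatives, and to use that this average is never larger than the ``top'' loss appearing in $\mathcal{L}^\ell$. Write $\widehat h_\ell(\x,\w)=\frac1n\sum_{j=1}^n\ell(\w^\top\x_j^- -\w^\top\x)$. Since $\ell$ is non-decreasing, for every $i$ and every $\w$ we have the elementary chain
\[
\widehat h_\ell(\x_i^+,\w)\ \le\ \max_{1\le j\le n}\ell(\w^\top\x_j^- -\w^\top\x_i^+)\ =\ \ell\!\Big(\max_{1\le j\le n}\w^\top\x_j^- -\w^\top\x_i^+\Big).
\]
Hence it suffices to prove that, with probability at least $1-e^{-t}$, $h_\ell(\x_i^+,\w)\le \widehat h_\ell(\x_i^+,\w)+\delta$ holds \emph{simultaneously} for all $i\in\{1,\dots,m\}$ and all $\w\in\W$; once the population quantity $h_\ell(\x_i^+,\w)$ is pinned to the empirical top loss up to $\delta$, the indicator in the statement can be charged to $\mathcal{L}^\ell(\w,S)$, as described below.

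First I would condition on $S_+$, so that the only remaining randomness is the i.i.d.\ draw of the $n$ negatives and the $\x_i^+$ are fixed vectors in $\X$. For each fixed $i$ I would bound the one-sided supremum
\[
Z_i \;=\; \sup_{\w\in\W}\Big(h_\ell(\x_i^+,\w)-\widehat h_\ell(\x_i^+,\w)\Big),
\]
an empirical process indexed by $\w\in\W$. I would control $\E Z_i$ by symmetrization followed by the Ledoux--Talagrand contraction inequality (the map $z\mapsto\ell(z)$ is $G$-Lipschitz on $[-2\rho,2\rho]$, the relevant range since $\|\w\|\le\rho$ and $\|\x\|\le1$), reducing to the Rademacher complexity of the linear class $\{\x\mapsto\w^\top\x:\|\w\|\le\rho\}$ together with the fixed offset $\w^\top\x_i^+$; this yields $\E Z_i\le 4G(\rho+1)/\sqrt n$, the first term of $\delta$. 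Then I would upgrade this to a high-probability bound via a Talagrand/Bousquet-type concentration inequality for the supremum of an empirical process, using that each summand $\ell(\w^\top\x_j^- -\w^\top\x_i^+)$ has range $O(\rho)$ and variance $O((G\rho)^2)$ after centering; this produces the sub-Gaussian term $2G\rho\sqrt{2u/n}$ and the Bernstein-type term $\tfrac{5}{3}\rho u/n$ with deviation parameter $u$. Taking $u=t+\log m$ and a union bound over $i=1,\dots,m$ gives the displayed $\delta$ with total failure probability at most $m\cdot e^{-(t+\log m)}=e^{-t}$.

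On the resulting event, $h_\ell(\x_i^+,\w)\le \ell(\max_{1\le j\le n}\w^\top\x_j^- -\w^\top\x_i^+)+\delta$ for every $i$ and $\w$, and I would finish by bookkeeping with the truncated-quadratic loss: whenever $\mathbb{I}(h_\ell(\x_i^+,\w)\ge\delta)$ is nonzero, the displayed inequality together with the choice of $\delta$ in (\ref{eq:pf-delta}) and $\ell(z)\ge\mathbb{I}(z\ge 0)$ forces the corresponding top loss $\ell(\max_j\w^\top\x_j^- -\w^\top\x_i^+)$ to dominate that indicator; averaging over $i$ then gives $\frac1m\sum_{i=1}^m\mathbb{I}(h_\ell(\x_i^+,\w)\ge\delta)\le\mathcal{L}^\ell(\w,S)$ uniformly in $\w\in\W$.

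The main obstacle is the uniform-over-$\w$ concentration in the middle step: a crude union bound is impossible since $\W$ is a continuum, while a naive covering-number argument over $\W$ would introduce a spurious dependence on the dimension $d$ that the target bound (\ref{eq:pf-delta}) must avoid. Recovering exactly the three terms of $\delta$ --- in particular a variance-aware $\sqrt{u/n}$ term rather than a worst-case $\sqrt{d/n}$ term --- forces the symmetrization-plus-contraction route for the expectation and a sharp Bousquet-style concentration inequality for the deviation, and keeping the constants $4G(\rho+1)$, $2G\rho\sqrt2$ and $5\rho/3$ honest is where most of the work lies. A secondary subtlety is that the $n$ negatives are shared across all $m$ positive instances, so the union bound over $i$ must be carried out after conditioning on $S_+$ and the per-$i$ events need not be independent; since we only union-bound (never intersect), this costs only the $\log m$ already present in $\delta$.
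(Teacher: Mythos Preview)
Your concentration machinery is exactly what the paper uses: Bousquet's version of Talagrand's inequality for the supremum over $\W$, symmetrization plus Ledoux--Talagrand contraction to bound the expected supremum by $4G\rho/\sqrt n$ (the paper gets the $+1$ from a slightly different bookkeeping of the offset), the variance bound $\sigma_P^2(\W)\le 4G^2\rho^2$, and then a union bound over the $m$ positives at level $u=t+\log m$. So the derivation of $\delta$ is the same.

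Where you diverge is in the final step, and there your ``bookkeeping'' does not close as written. From $h_\ell(\x_i^+,\w)\le \widehat h_\ell(\x_i^+,\w)+\delta\le \ell(\max_j\w^\top\x_j^--\w^\top\x_i^+)+\delta$ together with $h_\ell\ge\delta$, you only obtain that the top loss is $\ge 0$, which is vacuous; and your appeal to $\ell(z)\ge\mathbb{I}(z\ge 0)$ would require $\max_j\w^\top\x_j^-\ge\w^\top\x_i^+$, which your chain does not provide. The paper avoids this by an explicit case split \emph{before} invoking concentration: for each $\w$ it partitions $S_+$ into
\[
\mathcal{A}(\w)=\Big\{\x_i^+:\ \w^\top\x_i^+>\max_j\w^\top\x_j^-+1\Big\},\qquad \mathcal{B}(\w)=S_+\setminus\mathcal{A}(\w).
\]
On $\mathcal{A}(\w)$ the truncated quadratic loss vanishes term by term, so $\widehat h_\ell(\x_i^+,\w)=0$ \emph{exactly}; the concentration bound then gives $h_\ell(\x_i^+,\w)\le\delta$ and those indicators are zero. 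The surviving indicators are at most $|\mathcal{B}(\w)|$, which the paper charges directly to $m\,\mathcal{L}^\ell(\w,S)$. The point is that the split exploits a structural fact about the truncated quadratic (it is identically zero past the margin) that your purely quantitative chain $h_\ell\le\widehat h_\ell+\delta\le\text{top loss}+\delta$ throws away; without it, the indicator cannot be dominated by the top loss when that loss lies in $(0,1)$.
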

\begin{proof}
For any $\w \in \W$, we define two instance sets by splitting $S^+$, that is
\begin{align*}
\mathcal{A}(\w) = \Big\{\x_i^+: \w^{\top}\x_i^+ > \max_{j\in[n]} \w^{\top}\x_j^- + 1\Big\}
\ , \ \
\mathcal{B}(\w) = \Big\{\x_i^+: \w^{\top}\x_i^+ \leq \max_{j\in[n]} \w^{\top}\x_j^- + 1\Big\}\ .
\end{align*}
For $\x_i^+ \in \mathcal{A}(\w)$, we define
\begin{align*}
\|P &- P_n\|_{\W} = \sup_{\|\w\| \leq \rho} \left| h_{\ell}(\x_i^+, \w) - \frac{1}{n}\sum_{j=1}^n \ell(\w^{\top}\x_j^- -\w^{\top}\x_i^+)\right| \ .
\end{align*}
Using the Talagrand's inequality and in particular its variant (specifically, Bousquet bound) with improved constants derived in~\citep{Bousquet024} \citep[see also][Chapter~2]{Koltchinskii11}, we have, with probability at least $1-e^{-t}$,
\begin{equation}\label{eq:talagrand}
\|P - P_n\|_{\W} \leq \E\|P - P_n\|_{\W} + \frac{2t\rho}{3n} + \sqrt{\frac{2t}{n}\left(\sigma^2_P(\W) + 2\mathbb{E}\|P - P_n\|_{\W}\right)}\ .
\end{equation}
We now bound each item on the right hand side of (\ref{eq:talagrand}). First, we bound $\mathbb{E}\|P_n - P\|_{\W}$ as
\begin{align}\label{eq:pf-B1}
\mathbb{E}\|P - P_n\|_{\W}
&= \frac{2}{n} \E\left[\sup_{\|\w\| \leq \rho} \sum_{j=1}^n \sigma_j \ell(\w^{\top}(\x_j^- -\x_i^+))\right] \nonumber \\
&\leq \frac{4G}{n} \E\left[\sup_{\|\w\| \leq \rho} \sum_{j=1}^n \sigma_j (\w^{\top}(\x_j^- -\x_i^+))\right]
\leq \frac{4G\rho}{\sqrt{n}} \ ,
\end{align}
where $\sigma_j$'s are Rademacher random variables, the fist inequality utilizes the contraction property of Rademacher complexity, and the last follows from Cauchy-Schwarz inequality and Jensen's inequality. Next,  we bound $\sigma^2_P(\W)$,  that is,
\begin{equation}\label{eq:pf-B2}
\sigma^2_P(\W) = \sup_{\|\w\| \leq \rho} h_\ell^2(\x, \w) \leq 4G^2\rho^2 \ .
\end{equation}
By putting (\ref{eq:pf-B1}) and (\ref{eq:pf-B2}) into (\ref{eq:talagrand}) and using the fact that
\[
\frac{1}{n}\sum_{j=1}^n \ell(\w^{\top}(\x_j^--\x_i^+)) = 0 ~~~\text{for}~~~ \x_i^+ \in \mathcal{A}(\w),
\]
we thus have, with probability $1-e^{-t}$,
\begin{align*}
|h_{\ell}(\x_i^+, \w)|
&\leq \|P - P_n\|_{\W}
\leq \frac{4G\rho}{\sqrt{n}} +  \frac{2t\rho}{3n} + \sqrt{\frac{2t}{n}\left( 4G^2\rho^2 + \frac{8G\rho}{\sqrt{n}}\right)} \\
& \leq \frac{4G\rho}{\sqrt{n}} + \frac{2t\rho}{3n} + 2G\rho\sqrt{\frac{2t}{n}} + \frac{4G}{\sqrt{n}} + \frac{t\rho}{n}\\
& \leq \frac{4G(\rho+1)}{\sqrt{n}} + \frac{5t\rho}{3n} +  2G\rho\sqrt{\frac{2t}{n}} \ .
\end{align*}
Using the union bound over all $\x_i^+$'s, we obtain
\[
\max_{\x_i^+ \in \mathcal{A}(\w)} h_\ell(\x_i^+, \w) \leq \delta\ ,
\]
where $\delta$ is in (\ref{eq:pf-delta}). Thus, with probability $1-e^{-t}$, it follows
\[
\sum_{\x_i^+ \in \mathcal{A}(\w)} \mathbb{I}\left(h_\ell(\x_i^+, \w)\geq {\delta}\right) = 0\ .
\]
Therefore, we can obtain the conclusion based on the fact $|\mathcal{B}(\w)| \leq m \mathcal{L}^\ell(\w, S)$.
\end{proof}

Based on Lemma~\ref{lemma-1}, we are at the position to prove Theorem~\ref{thm:bound}.

\medskip

\begin{proof}[of Theorem~\ref{thm:bound}]
Let $\mathcal{S}(\W, \varepsilon)$ be a proper $\varepsilon$-net of $\W$ and $N(\rho, \varepsilon)$ be the corresponding covering number. According to standard result, we have
\[
\log N(\rho, \varepsilon) \leq d\log (9\rho / \varepsilon)\ .
\]
By using concentration inequality and union bound over $\w' \in \mathcal{S}(\W, \varepsilon)$, we have, with probability at least $1-e^{-t}$,
\begin{equation}\label{eq:l2}
\sup_{\w' \in \mathcal{S}(\W, \varepsilon)} P_\ell(\w', \delta) - \frac{1}{m}\sum_{i=1}^m \mathbb{I}(h_\ell(\x_i^+, \w')\geq \delta)
\leq \sqrt{\frac{2(t+d\log(9\rho/\varepsilon))}{m}} \ \ .
\end{equation}
Let $\mathbf{d} = \x^- - \x^+$ and $\varepsilon = \frac{1}{2\sqrt{m}}$. For $\w^* \in \W$, there exists $\w' \in \mathcal{S}(\W, \varepsilon)$ such that $\|\w' -\w^*\| \leq \varepsilon$, it holds that
\begin{align*}
\mathbb{I}(\w^{*\top}& \mathbf{d} \geq 0) = \mathbb{I}(\w'^\top \mathbf{d} \geq (\w'-\w^*)^\top \mathbf{d})
\leq  \mathbb{I}(\w'^\top \mathbf{d} \geq -\frac{1}{\sqrt{m}}) \leq 2 \ell(\w'^\top \mathbf{d}) \ .
\end{align*}
where the last step is based on $\ell(\cdot)$ is non-decreasing and
$
\ell(-{1}/{\sqrt{m}}) \geq \frac{1}{2} ~ \text{if} ~ m \geq 12 \ .
$
We thus have $h_b(\x^+, \w^*) \leq 2 h_\ell(\x^+, \w')$ and therefore $P_b(\w^*, \delta) \leq P_\ell(\w', {\delta}/{2})$.

As a consequence, from (\ref{eq:l2}), Lemma~\ref{lemma-1} and the fact
\[
\mathcal{L}^\ell_{k}(\w', S) \leq  \mathcal{L}^\ell_{k}(\w, S) +  \frac{G\rho}{\sqrt{m}}\ ,
\]
we have, with probability at least $1-2e^{-t}$,
\[
P_b(\w^*, \delta) \leq \mathcal{L}^\ell_{k}(\w^*, S) + \frac{G\rho}{\sqrt{m}} + \sqrt{\frac{2t+2d\log(9\rho) + d\log m}{m}} \  \ ,
\]
where $\delta$ is as defined in (\ref{eq:pf-delta}), and the conclusion follows by hiding constants.
\end{proof}

\section{Experiments}\label{sec:exp}
To evaluate the performance of the proposed TopPush algorithm, we conduct a set of experiments on real-world datasets.
\subsection{Settings}
Table~\ref{tbl:data} (left column) summarizes the datasets used in our experiments. Some of them were used in previous studies~\citep{Agarwal11,Rakotomamonjy12,BoydCMR12}, and others are larger datasets from different domains. For example, \texttt{diabetes} is a medical task, \texttt{news20-forsale} is on text classification, \texttt{spambase} is about email spam filtering, and \texttt{nslkdd} is a network intrusion dataset. It should be noted that \texttt{news20-forsale} is transformed from the \texttt{news20} dataset by treating \texttt{forsale} as positive class and others as negative. All these datasets are publicly available\footnote{These datasets are available at \url{http://www.csie.ntu.edu.tw/~cjlin/libsvmtools/datasets} and \url{http://nsl.cs.unb.ca/NSL-KDD/} \ . }.

We compare TopPush with state-of-the-art ranking algorithms that focus on accuracy at the top, including SVM$^{\rm MAP}$~\citep{Yue:2007}, SVM$^{\rm pAUC}$~\citep{NarasimhanA-KDD13} with $\alpha=0$ and $\beta=1/n$, AATP~\citep{BoydCMR12} and InfinitePush~\citep{Agarwal11}. In addition, since the bipartite ranking problem can be solved as a binary classification problem, logistic regression (LR) which is shown to be consistent with bipartite ranking~\citep{KotlowskiDH11} and cost-sensitive SVM (cs-SVM) that addresses imbalance class distribution by introducing different misclassification costs are compared. Also, for completeness, SVM$^{\rm Rank}$~\citep{Joachims:2006} for AUC optimization are included in the comparison. We implement TopPush and InfinitePush using MATLAB, implement AATP using CVX~\citep{cvx} as in~\citep{BoydCMR12}, and use LIBLINEAR~\citep{Fan:2008} for LR and cs-SVM, and use the codes shared by the authors of the original works for other algorithms. It should be noted that binary classification algorithms LR and cs-SVM implemented by LIBLINEAR are of state-of-the-art efficiency.

In experiments, we measure the accuracy at the top of the ranked list by several commonly used metrics: (i)
positives at the top (Pos@Top)~\citep{Agarwal11,Rakotomamonjy12,BoydCMR12}, which is defined as the fraction of positive instances ranked above the top-ranked negative instance, (ii) average precision (AP) and (iii) normalized DCG scores (NDCG). In addition, ranking performance in terms of AUC are also reported.

On each dataset, experiments are run for thirty trials. In each trial, the dataset is randomly divided into two subsets: 2/3 for training and 1/3 for test. For all algorithms in comparison, we set the precision parameter $\epsilon$ to $10^{-4}$, choose other parameters by a 5-fold cross validation (based on the average value of Pos@Top) on training set, and evaluate the performance on test set. In detail, the regularization parameter $\lambda$ or $C$ is chosen from $\{10^{-3}, 10^{-2}, \ldots, 10^3\}$. For cs-SVM, the misclassification cost for positive instances is chosen from $\{10^{-3}, 10^{-2}, \ldots, 10^3\}$. For AATP, the parameter $\tau$ is from $\{2^{-5}, 2^{-4}, \ldots, 1\} \times \frac{m}{m+n}$, where $m$ and $n$ are the number of positive and negative instances respectively. The intervals are extended if the best parameter is on the boundary. Finally, averaged results over thirty trails are reported.  All experiments are run on a workstation with two Intel Xeon E7 CPUs and 16G memory.

\begin{table}[!t]
\renewcommand\arraystretch{0.72}
\centering \footnotesize
\begin{tabular}{l@{~~}l@{~~}|@{~~}l@{~~}l@{~~}|@{~~}l@{~~}l@{~~}|@{~~}l}
\toprule
\textbf{Data} & \textbf{Algorithm} &   \textbf{Time} (s) &  \textbf{Pos@Top} & \quad~\textbf{AP} &  ~~\textbf{NDCG} & \quad\textbf{AUC}\\
\midrule
\texttt{diabetes}
& {TopPush} & 5.11$\times$10$^{-3}$ &
.123$\pm$.056 & .872$\pm$.023 &.976$\pm$.005 & .780$\pm.$037\\
~ {\scriptsize 500/268}
& LR & 2.30$\times$10$^{-2}$ &
.064$\pm$.075$\bullet$ & .881$\pm$.022 &.973$\pm$.008 & .810$\pm$.030$\circ$\\
~ {\scriptsize d: 34}
& cs-SVM  & 7.70$\times$10$^{-2}$ &
.077$\pm$.088$\bullet$ & .758$\pm$.166$\bullet$ &.920$\pm$.078$\bullet$ & .624$\pm$.246$\bullet$\\
& SVM$^{\rm Rank}$  &  6.11$\times$10$^{-2}$ &
.087$\pm$.082$\bullet$ & .879$\pm$.022 & .975$\pm$.006 & .801$\pm$.033$\circ$\\
& SVM$^{\rm MAP}$   & 4.71$\times$10$^{0}$ &
.077$\pm$.072$\bullet$ & .879$\pm$.012 & .969$\pm$.009 & .616$\pm$.191$\bullet$\\
& SVM$^{\rm pAUC}$  & 2.09$\times$10$^{-1}\blacktriangle$ &
.053$\pm$.096$\bullet$ & .668$\pm$.123$\bullet$ & .884$\pm$.065$\bullet$ & .506$\pm$.167$\bullet$\\
& InfinitePush      &  2.63$\times$10$^{1}\bigstar$ &
.119$\pm$.051 & .877$\pm$.035 &.978$\pm$.007 & .793$\pm$.041\\
& AATP      &  2.72$\times$10$^{3}\bigstar$ &
.127$\pm$.061 & .881$\pm$.035 &.979$\pm$.010 & .783$\pm$.038 \\
\midrule
\texttt{news20-forsale}
& {TopPush} & 2.16$\times$10$^0$ &
.191$\pm$.088 & .843$\pm$.018 &.970$\pm$.005 & .969$\pm$.005 \\
~ {\scriptsize 999/18,929}
& LR & 4.14$\times$10$^0$ &
.086$\pm$.067$\bullet$ & .803$\pm$.020$\bullet$ &.962$\pm$.005 & .973$\pm$.004\\
~ {\scriptsize d: 62,061}
& cs-SVM  & 1.89$\times$10$^0$  &
.114$\pm$.069$\bullet$ & .766$\pm$.021$\bullet$ &.955$\pm$.006$\bullet$ & .964$\pm$.005\\
& SVM$^{\rm Rank}$ &  2.96$\times$10$^2\bigstar$ &
.149$\pm$.056$\bullet$  & .850$\pm$.016 &  .972$\pm$.003  & .974$\pm$.004\\
& SVM$^{\rm MAP}$   &  8.42$\times$10$^2\bigstar$ & 
.184$\pm$.092 & .832$\pm$.022 & .969$\pm$.007 & .961$\pm$.008\\
& SVM$^{\rm pAUC}$  & 3.25$\times$10$^2\bigstar$ & 
.196$\pm$.087 & .812$\pm$.019$\bullet$ & .963$\pm$.005$\bullet$ & .957$\pm$.007$\bullet$ \\
\midrule
\texttt{nslkdd}
& {TopPush} & 7.64$\times$10$^1$ &
.633$\pm$.088 & .978$\pm$.001 &.997$\pm$.001 & .969$\pm$.003\\
~ {\scriptsize 71,463/77,054}
& LR  & 3.63$\times$10$^1$ &
.220$\pm$.053$\bullet$ & .981$\pm$.002 &.998$\pm$.001 & .972$\pm$.002\\
~ {\scriptsize d: 121}
& cs-SVM  & 1.86$\times$10$^0$ &
.556$\pm$.037$\bullet$ & .980$\pm$.001 &.998$\pm$.001 & .972$\pm$.001 \\
& SVM$^{\rm pAUC}$   & 1.72$\times$10$^2$ & 
.634$\pm$.059 & .956$\pm$.002$\bullet$ & .996$\pm$.001 & .948$\pm$.002$\bullet$ \\
\midrule
\texttt{real-sim}
& {TopPush} &  1.34$\times$10$^1$ &
.186$\pm$.049 & .986$\pm$.001 &.998$\pm$.001 & .992$\pm$.002 \\
~ {\scriptsize 22,238/50,071}
& LR  &  7.67$\times$10$^0$　&
.100$\pm$.043$\bullet$ & .989$\pm$.001 &.999$\pm$.001 & .995$\pm$.002　\\
~ {\scriptsize d: 20,958}
& cs-SVM   & 4.84$\times$10$^0$  &
.146$\pm$.031$\bullet$ & .979$\pm$.001 &.998$\pm$.001  & .989$\pm$.001　\\
& SVM$^{\rm Rank}$  & 1.83$\times$10$^3\bigstar$ &  
.090$\pm$.045$\bullet$ &  .986$\pm$.000 & .999$\pm$.001 & .994$\pm$.002\\
\midrule
\texttt{spambase}
& {TopPush} & 1.51$\times$10$^{-1}$ &
.129$\pm$.077 & .922$\pm$.006 &.988$\pm$.001  & .942$\pm$.005 \\
~ {\scriptsize 1,813/2,788}
& LR  & 3.11$\times$10$^{-2}$ &
 .071$\pm$.053$\bullet$ & .920$\pm$.010 &.987$\pm$.003 & .952$\pm$.005$\circ$ \\
~ {\scriptsize d: 57}
& cs-SVM & 8.31$\times$10$^{-2}$ &
.069$\pm$.059$\bullet$ & .907$\pm$.010$\bullet$ & .980$\pm$.004$\bullet$ & .941$\pm$.005\\
& SVM$^{\rm Rank}$   &  2.31$\times$10$^{1}\blacktriangle$ & 
.069$\pm$.076$\bullet$ &  .931$\pm$.010  & .990$\pm$.003 & .970$\pm$.005$\circ$ \\
& SVM$^{\rm MAP}$   & 1.92$\times$10$^{2}\bigstar$ &
.097$\pm$.069$\bullet$ &  .935$\pm$.014  & .984$\pm$.005 & .920$\pm$.007 \\
& SVM$^{\rm pAUC}$  & 1.73$\times$10$^{0}\blacktriangle$  &
.073$\pm$.058$\bullet$ & .854$\pm$.024$\bullet$ & .975$\pm$.007$\bullet$ & .889$\pm$.019$\bullet$ \\
& InfinitePush   & 1.78$\times$10$^{3}\bigstar$ &
.132$\pm$.087 & .920$\pm$.005 &.987$\pm$.002 & .947$\pm$.007 \\
\midrule
\texttt{url}
& {TopPush}  & 5.11$\times$10$^3$ &
.474$\pm$.046 & .986$\pm$.001 &.999$\pm$.001 & .988$\pm$.002  \\
~ {\scriptsize 792,145/1,603,985}
& LR & 8.98$\times$10$^3$  &
.362$\pm$.113$\bullet$ & .993$\pm$.001$\circ$ &.999$\pm$.001 & .992$\pm$.002\\
~ {\scriptsize d: 3,231,961}
& cs-SVM  &  3.78$\times$10$^3$ &
.432$\pm$.069$\bullet$ & .991$\pm$.002 & .998$\pm$.001 & .998$\pm$.001 \\
\midrule
\texttt{w8a}
& {TopPush} & 7.35$\times$10$^0$ &
 .226$\pm$.053 & .710$\pm$.019 &.938$\pm$.005 & .922$\pm$.008 \\
~ {\scriptsize 1,933/62,767}
& LR  & 2.46$\times$10$^0$ &
.107$\pm$.093$\bullet$ & .450$\pm$.374$\bullet$ &.775$\pm$.221$\bullet$ & .591$\pm$.460$\bullet$\\
~ {\scriptsize d: 300}
& cs-SVM   &  3.87$\times$10$^0$  &
.118$\pm$.105$\bullet$ & .447$\pm$.372$\bullet$ &.774$\pm$.220$\bullet$ & .591$\pm$.461$\bullet$ \\
& SVM$^{\rm pAUC}$  & 2.59$\times$10$^3\bigstar$ &
.207$\pm$.046 & .673$\pm$.021$\bullet$ & .929$\pm$.006$\bullet$ & .911$\pm$.010$\bullet$ \\
\bottomrule
\end{tabular}
\caption{\small Data statistics (left column) and experimental results. For each dataset,
the number of positive and negative instances is below the data name as $m/n$, together with the number of dimensions $d$.
For training time comparison,``$\blacktriangle$'' (``$\bigstar$'') are marked if TopPush is at least 10 (100)
times faster than the compared algorithm. For performance (mean$\pm$std) comparison, ``$\bullet$'' (``$\circ$'') are marked
if TopPush performs significantly better (worse) than the baseline method based on pairwise $t$-test at 0.9 significance
level. On each dataset, if the evaluation of an algorithm can not be completed in two weeks,
it will be stopped and the corresponding results will be missing from the table.
}\label{tbl:data}
\end{table}

\subsection{Results}
In Table~\ref{tbl:data}, we report the performance of the algorithms in comparison, where the statistics of testbeds are included in the first column of the table. For better comparison between the performance of TopPush and baselines, pairwise $t$-tests at the significance level of $0.9$ are performed and results are marks ``$\bullet$ / $\circ$'' in Table~\ref{tbl:data} when they are statistically significantly worse/better than TopPush.
When an evaluation task that evaluates one algorithm on a dataset, including parameter selection, training and testing, can not be completed in two weeks, it will be stopped automatically, and no result will be reported. This is why some algorithms are missing from the table for certain datasets, especially for those large datasets.

We can see from Table~\ref{tbl:data} that TopPush, LR and cs-SVM succeed to finish the evaluation on all datasets (even the largest datasets \texttt{url}). In contrast, SVM$^{\rm Rank}$, SVM$^{\rm Rank}$ and SVM$^{\rm pAUC}$ fail to complete the task in time for several large datasets. InfinitePush and AATP have the worst scalability: they are only able to finish the smallest dataset \texttt{diabetes}, this is easy to understand since InfinitePush needs to solve an optimization problem with $mn$ variables and AATP needs to solve $m+n$ quadratic program problems. We thus find that overall, the proposed TopPush algorithm scales well to large datasets.

\subsubsection{Ranking Performance}
In terms of evaluation metric Pos@Top, we find that TopPush yields similar performance as InfinitePush and AATP, and performs significantly better than the other baselines including LR and cs-SVM, SVM$^{\rm Rank}$, SVM$^{\rm MAP}$ and SVM$^{\rm pAUC}$. This is consistent with the design of TopPush that aims to maximize the accuracy at the top of the ranked list. Since the loss function optimized by InfinitePush and AATP are similar as that for TopPush, it is not surprising that they yield similar performance. The key advantage of using the proposed algorithm versus InfinitePush and AATP is that it is computationally more efficient and scales well to large datasets. In terms of AP and NDCG, we observe that TopPush yield similar, if not better, performance as the state-of-the-art methods, such as SVM$^{\rm MAP}$ and SVM$^{\rm pAUC}$, that are designed to optimize these metrics. Overall, we can conclude that TopPush is effective in optimizing the ranking accuracy for the top ranked instances.

Meanwhile, we can see that TopPush achieves similar AUC values with on most datasets (only worse than SVM$^{\rm Rank}$ that is specially designed for AUC optimization on three datasets, but their differences are not large). This can be understood by Proposition~\ref{thm:loss}, which shows that the loss function~(\ref{eq:ipl2}) is a upper bound over the ranking loss, and TopPush which minimizes~(\ref{eq:ipl2}) can also achieve a small ranking loss and hereafter a good AUC.

\subsubsection{Training Efficiency }
To evaluate computational efficiency, we set the parameters of different algorithms to be the values that are selected by cross-validation, and run these algorithms on full datasets that include both training and testing sets. Table~\ref{tbl:data} summarizes the training time of different algorithms. From the results, we can see that TopPush is faster than state-of-the-art ranking methods on most datasets. In fact, the training time of TopPush is even similar to that of LR and cs-SVM implemented by LIBLINEAR. Since the time complexity of learning a binary classification model is usually linear in the number of training instances, this result implicitly suggests a linear time complexity for the proposed algorithm.

\begin{figure}
\centering
\includegraphics[width=0.55\linewidth]{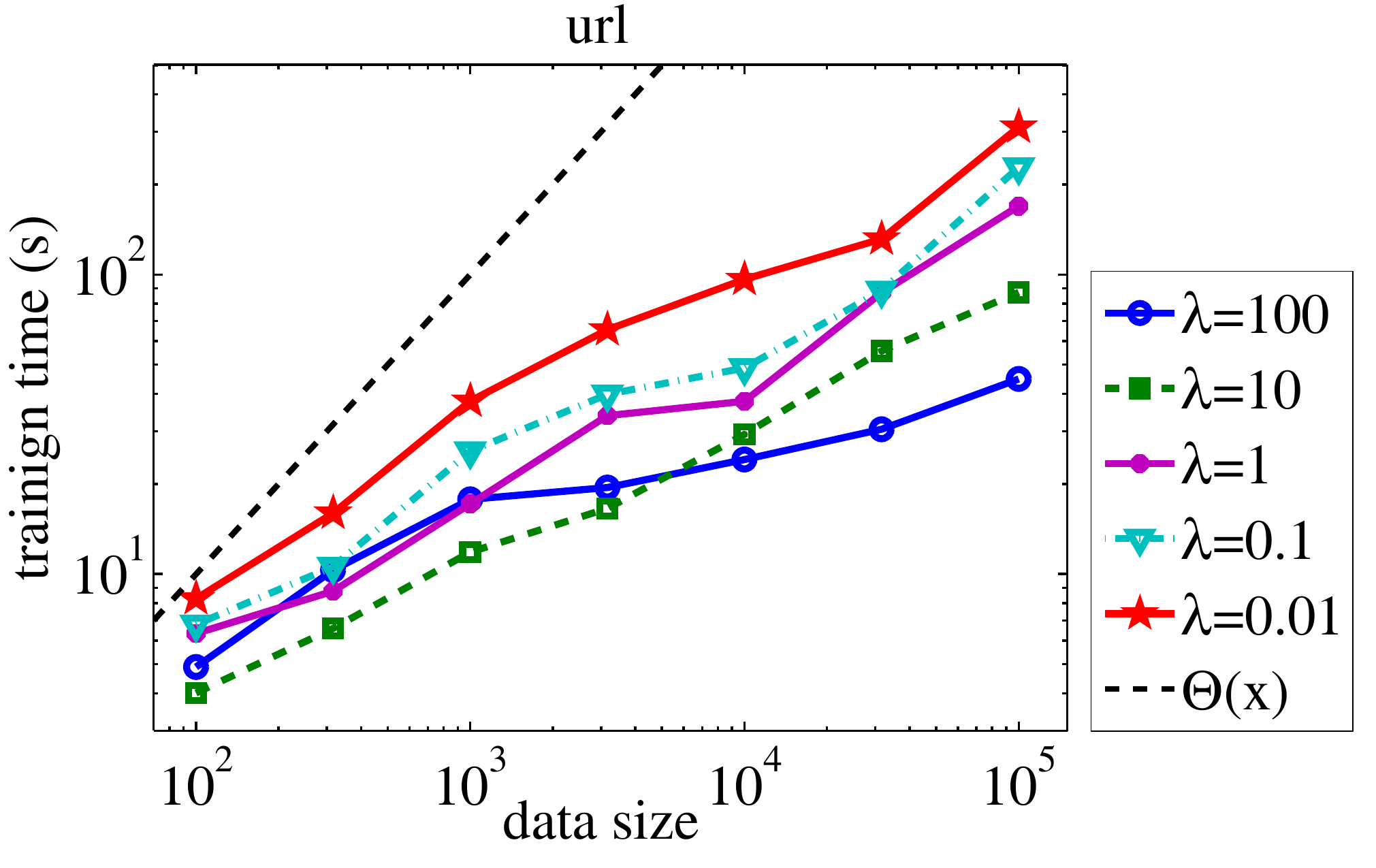}\vspace{-3mm}
\caption{Training time of TopPush versus training data size for different values of $\lambda$.}\label{fig:scale}
\end{figure}

\subsubsection{Scalability}
We study how TopPush scales to different number of training examples by using the largest dataset {\tt url}. Figure~\ref{fig:scale} shows the log-log plot for the training time of TopPush vs. the size of training data, where different lines correspond to different values of $\lambda$.
Lines in a log-log plot correspond to polynomial growth $\Theta(x^p)$, where $p$ corresponds to the slope of the line. %
For the purpose of comparison, we also include a black dash-dot line that tries to fit the training time by a linear function in the number of training instances (i.e., $\Theta(m+n)$). From the plot, we can see that for different regularization parameter $\lambda$, the training time of TopPush increases even slower than the number of training data.  This is consistent with our theoretical analysis given in Section~\ref{sec:complexity}.

\subsubsection{Influence of Parameters}
We study the influence of precision parameter $\epsilon$ and regularization parameter $\lambda$ on the computational cost and prediction performance of TopPush.
First, we fix $\lambda$ to be $1$, and run TopPush with $\epsilon \in \{10^{-8}, \ldots, 10^{-2}\}$. We measure the number of iterations needed to achieve the accuracy $\epsilon$, and the prediction performance of the learned ranking function. Figure~\ref{fig:epsilon} show the results for dataset {\tt w8a}. Similar results are obtained for the other datasets. It is not surprising to observe that the smaller the $\epsilon$, the better the prediction performance, but at the price of a larger number of iterations and consequentially a longer training time. Evidently, we may want to set the precision parameter $\epsilon$ to balance the tradeoff between computational time and prediction performance. According to Figure~\ref{fig:epsilon}, we found that $\epsilon=10^{-4}$ appears to achieve nearly optimal performance with a small number of iterations.

\begin{figure}[!t]
  \centering
  \includegraphics[height=0.3\linewidth]{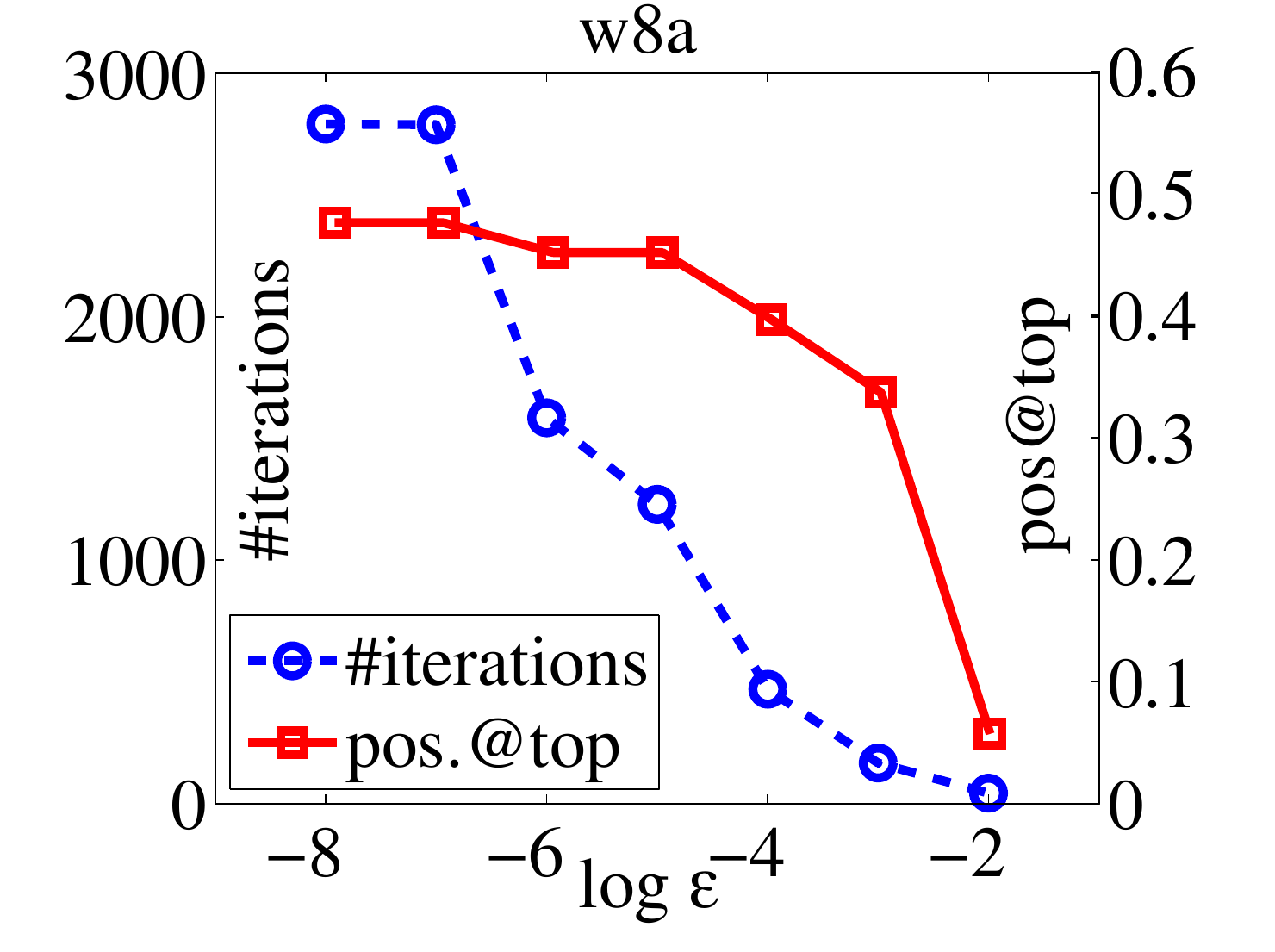} \quad\quad
  \includegraphics[height=0.3\linewidth]{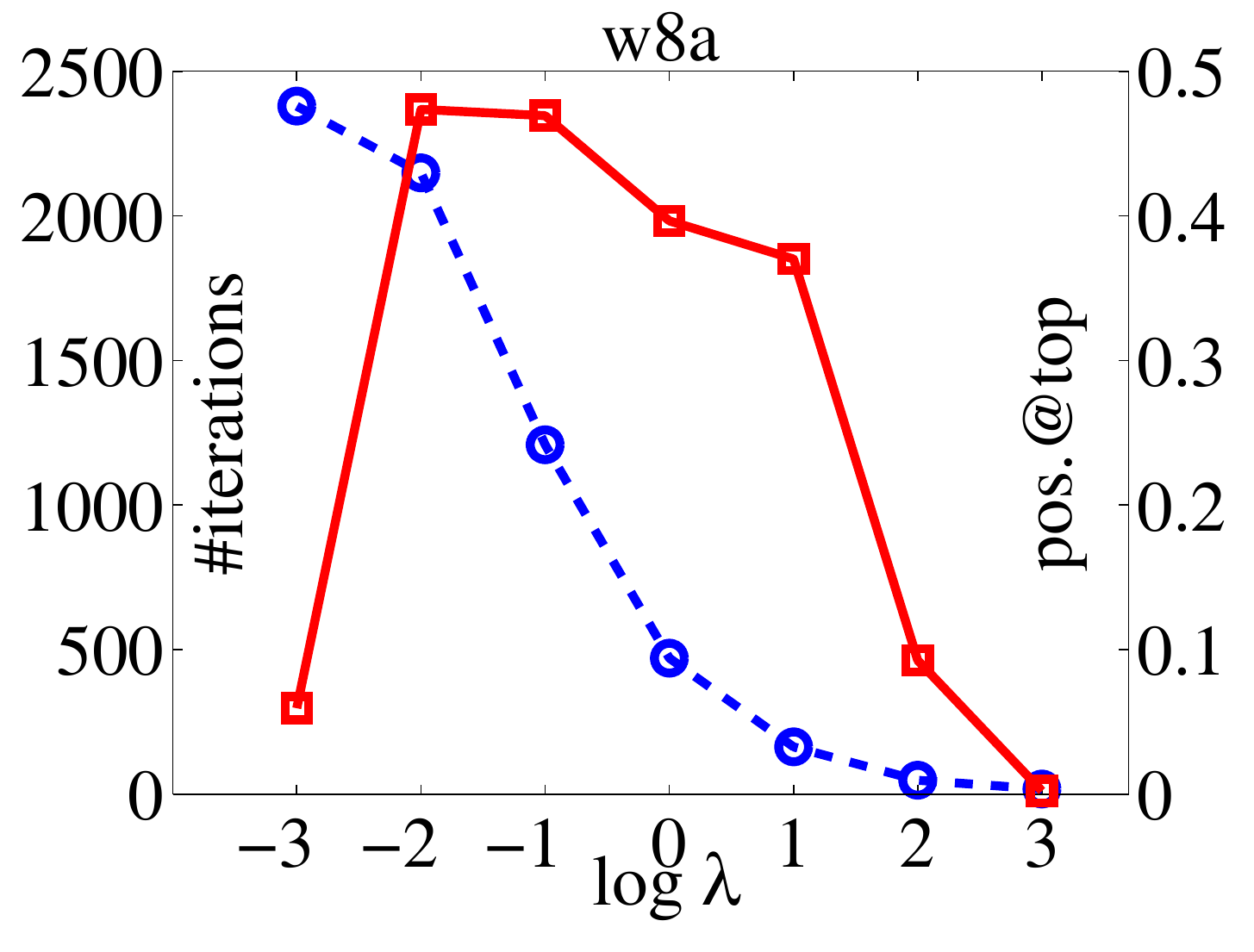}
  \caption{Influence of the precision parameter $\epsilon$ and the regularization parameter $\lambda$ on TopPush, where the horizontal axis is $\epsilon$ and $\lambda$,  vertical axes are number of iterations (left) and prediction performance (right), legends of two plots are the same. }\label{fig:epsilon}
\end{figure}
In the second experiment, we fix $\epsilon$ to $10^{-4}$, and examine the influence of $\lambda$. Figure~\ref{fig:epsilon} shows how the number of iterations and prediction accuracy are affected by different $\lambda$ on dataset {\tt w8a}. We observe that the smaller the $\lambda$, the smaller the number of iterations. This is because regularization parameter $\lambda$ controls the domain size, and as a result, a smaller $\lambda$ will lead to a smaller solution domain and thus a faster convergence to the optimal solution.
As expected, we need to choose the value $\lambda$ to achieve good performance, since it is a regularization parameter. Meanwhile, the computational cost of TopPush reduces when a larger value of $\lambda$ is used. This is easy to understand, because $\lambda$ controls the size of the domain from which TopPush searches the optimal ranking function, and a large $\lambda$ reduces the domain size. Empirically, we can set $\lambda$ to 1 by default, and search  $\lambda$ in $\{10^{-2}, \ldots, 10^{2}\}$ for better solution.

\section{Conclusion and Future Work}\label{sec:con}
In this paper, we focus on bipartite ranking algorithms that optimize accuracy at the top of the ranked list.
To this end, we consider to maximize the number of positive instances that are ranked above any negative instances, and develop an efficient algorithm, named as {TopPush} to solve related optimization problem. Compared with existing work on this topic, the proposed {TopPush} algorithm scales linearly in the number of training instances, which is in contrast to most existing algorithms for bipartite ranking whose time complexities dependents on the number of positive-negative instance pairs. Moreover, our theoretical analysis clearly shows that it will lead to a ranking function that places many positive instances the top of the ranked list. Empirical studies verify the theoretical claims: the TopPush algorithm is effective in maximizing the accuracy at the top and is significantly more efficient than the state-of-the-art algorithms for bipartite ranking. In the future, we plan to develop appropriate univariate loss, instead of pairwise ranking loss, for efficient bipartite ranking that maximize accuracy at the top.

\section*{Acknowledgments}
This research was supported by the 973 Program (2014CB340501), NSFC (61333014), NSF (IIS-1251031), and ONR Award (N000141210431). Z.-H. Zhou is the corresponding author of this paper.

\vskip 0.2in
\bibliography{topPush-full}

\bibliographystyle{plain}

\end{document}